\pdfoutput=1
\documentclass{article}

\usepackage{iclr2027_conference}
\usepackage[T1]{fontenc}
\usepackage[utf8]{inputenc}

\usepackage{lmodern}
\usepackage{tgtermes}

\usepackage{amsmath,amssymb,amsthm}
\usepackage{booktabs,tabularx,array,multirow}
\usepackage{enumitem}
\usepackage{needspace}
\usepackage{graphicx,float,microtype}
\usepackage{xcolor}
\usepackage{listings}                %
\usepackage{tikz}
\usepackage{longtable}
\usetikzlibrary{arrows.meta,positioning,fit,backgrounds,calc,decorations.pathreplacing}
\graphicspath{{figures/}}
\usepackage[normalem]{ulem}          %

\theoremstyle{definition}
\newtheorem{definition}{Definition}
\theoremstyle{plain}
\newtheorem{proposition}{Proposition}[section]
\newtheorem{lemma}[proposition]{Lemma}
\newtheorem{theorem}[proposition]{Theorem}

\setlist[itemize]{leftmargin=1.5em,itemsep=1pt,topsep=2pt}
\setlist[enumerate]{leftmargin=1.8em,itemsep=1pt,topsep=2pt}
\newcommand{\code}[1]{\texttt{#1}}

\newif\ifshowtwins \showtwinsfalse   %
\newif\ifshowoldimp \showoldimpfalse %

\usepackage{hyperref,xurl}
\hypersetup{pdftitle={Algebraic World Model: From Exploration to Proofs},
  pdfauthor={Guancheng Pan, Qixin Tan, Yang Yuan, Andrew C. Yao},
  colorlinks=true, linkcolor=blue!60!black, citecolor=blue!60!black, urlcolor=blue!60!black}

\hypersetup{
  pdftitle={Algebraic World Model: From Exploration to Proofs},
  pdfauthor={Guancheng Pan, Qixin Tan, Yang Yuan, Andrew C. Yao}
}

\title{Algebraic World Model:\\ From Exploration to Proofs}

\author{%
  \mdseries\begin{tabular}[t]{@{}p{0.45\linewidth}p{0.45\linewidth}@{}}
    \textbf{Guancheng Pan}\thanks{These authors contributed equally.} & \textbf{Qixin Tan}\footnotemark[1] \\
    Zhejiang University & Tsinghua University \\
    \texttt{3250101086@zju.edu.cn} & \texttt{tqx23@mails.tsinghua.edu.cn} \\[10pt]
    \textbf{Yang Yuan} & \textbf{Andrew C. Yao} \\
    Tsinghua University & Tsinghua University \\
    Shanghai QiZhi Institute & Shanghai QiZhi Institute \\
    \texttt{yuanyang@tsinghua.edu.cn} & \texttt{andrewcyao@mail.tsinghua.edu.cn}
  \end{tabular}
}

\iclrfinalcopy

\begin{document}
\maketitle
\lhead{}
\lhead{Preprint}

\begin{abstract}

AI has made remarkable progress in mathematics, solving challenging problems with machine-checkable proofs in Lean, yet typically relying on predefined mathematical definitions and formal specifications. This raises a natural question: can it reason rigorously about an unknown world? We introduce the Algebraic World Model (AWM), a framework that turns interaction with unfamiliar environments into explicit mathematical models supporting formal reasoning. AWM explores the environment and develops executable Python hypotheses, continually testing and revising them against observations. It then structures these hypotheses into typed objects and shared rules, simplifies their structure through \mbox{category-theory-guided} transformations, and formalizes the resulting model in Lean~4 to prove reachability and impossibility.
AWM can formally prove that a goal is unreachable, even beyond explored trajectories. On two impossible game levels, it constructs closed, kernel-checked Lean proofs excluding every legal winning sequence in all six runs, while Claude Opus~5.5 and GPT-6 Astra, equipped with coding tools, produce none in twelve baseline runs. Four of AWM's six initial bounds are correct, compared with one for Opus~5.5 and none for GPT-6 Astra; given only feedback that a cheaper win exists, AWM revises its rules and constructs new proofs, reaching six of six correct bounds.
Moreover, AWM produces interpretable mathematical models whose rules can be inspected and tested step by step. Across all 25 public ARC-AGI-3 games, these models enable Lean to replay the winning trajectories of all 183 levels and generalize beyond observed play, predicting unseen transitions and detecting changes in game mechanics. With Claude Opus~5, AWM clears all levels with a score of 99.9, while its certified winning trajectories use only 43\% of the human baseline's actions.

\end{abstract}

\section{Introduction}
\label{sec:intro}

AI systems are advancing the frontiers of mathematics,
from Lean-verified progress on the Riemann zeta function
\citep{anthropic2026zeta} to recently reported resolutions
of the Navier--Stokes Millennium Prize problem and the
local smoothing conjecture
\citep{openai2026navierstokes,openai2026localsmoothing}. Yet these advances typically rely on predefined mathematical definitions and formal specifications. In an unfamiliar environment, however, an agent must first identify the relevant objects, infer how actions change them, and determine what constitutes success. The rules on which its reasoning depends are themselves hypotheses to be learned and revised through interaction. This raises a broader question:
\begin{center}
\emph{Can an agent reason rigorously about an unknown world?}
\end{center}

Euler's treatment of the K\"onigsberg bridges illustrates the role of mathematical structure \citep{euler1741solutio}. Land masses become vertices and bridges become edges. A walk crossing every bridge exactly once pairs arrivals and departures at every vertex other than its endpoints, so at most two vertices can have odd degree. The K\"onigsberg graph has four, ruling out every such walk without enumerating routes. A winning trajectory likewise establishes reachability, but proving that none exists requires reasoning about all legal sequences. When the environment is unfamiliar, the objects and rules needed for that reasoning must themselves be learned.

We introduce the Algebraic World Model (AWM), a framework that learns explicit mathematical models of unfamiliar environments through interaction and uses them for formal reasoning. AWM proceeds through three stages (Figure~\ref{fig:steps}). In \textbf{Explore \& Prototype}, the agent actively probes the environment and maintains executable hypotheses, testing and revising them against observed transitions. \textbf{Structure \& Simplify} organizes these hypotheses into typed objects and rules, then uses category-theory-guided transformations, each checked in Lean through 
projective and inductive
limits, to identify shared structure and simplify its representation while preserving tested distinctions. In \textbf{Formalize \& Prove}, AWM completes the model in Lean~4 \citep{demoura2021lean4} and constructs kernel-checked proofs of reachability, impossibility, and other properties. A winning witness establishes reachability; an invariant or a suitable cost lower bound can exclude every legal winning sequence under the model.

\begin{figure}[H]
\centering
\includegraphics[width=0.95\textwidth,trim=0 24bp 0 1.5bp,clip]{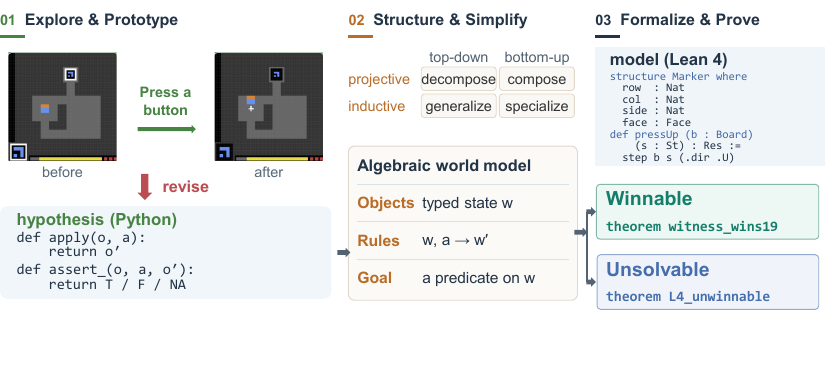}
\caption{AWM's three-stage pipeline: interactive exploration builds a provisional executable model, algebraic transformations reveal and simplify its structure, and Lean~4 formalization enables certified proofs of reachability or impossibility.}
\label{fig:steps}
\end{figure}

We test AWM's ability to prove impossibility on ARC-AGI-3 \citep{foundation2026arcagi3}, whose games reveal neither their rules nor their goals. We construct two impossible variants of \code{sc25} levels, preserving their mechanics while setting action budgets just below the minimum winning costs (\S\ref{sec:sc25}). Across three runs per level, AWM produces closed, kernel-checked Lean proofs excluding all legal winning sequences under its learned models in all six runs. Claude Opus~5.5 and GPT-6 Astra produce no such proofs across twelve baseline runs.

The difference extends beyond proof construction. AWM's initial bounds match the minimum winning costs in four of six runs, compared with one for Opus~5.5 (without a proof) and none for Astra; other baseline runs report incorrect bounds or remain inconclusive. For example, on level~4 Astra claims impossibility with an unproved minimum winning cost of 22, whereas AWM proves a tight bound of 19 under its learned model, matching the game's actual minimum winning cost. Given only feedback that a cheaper win exists, AWM revises the rules underlying its remaining two bounds and constructs new closed proofs, bringing its bound accuracy to six of six.

AWM's learned mechanisms are also explicit and inspectable: typed objects and named rules make their predictions available for step-by-step checking. Across all 25 public ARC-AGI-3 games, AWM builds compact models with a median of six object types and ten rules. These models enable Lean to replay the winning trajectories of all 183 levels, checking represented state transitions and final goal conditions (\S\ref{sec:q1}). With Opus~5, AWM clears all 25 games with a score of 99.9, competitive with leading code-executing systems; its certified winning trajectories use 43\% of the human baseline's actions.

We make three contributions.
\begin{itemize}
    \item \textbf{A framework from exploration to algebraic structure.}
    AWM turns interaction with an unknown environment into executable hypotheses and organizes them into compact models through \mbox{category-theory-guided} transformations checked in Lean, presenting object types as projective and inductive limits. Shared, parameterized rules express related mechanisms in a common form, allowing their properties to be stated and proved uniformly (\S\ref{sec:method}).

    \item \textbf{Formal impossibility reasoning in unknown worlds.}
    AWM constructs closed, kernel-checked Lean proofs that rule out all legal winning sequences under its learned rules and specified constraints, including sequences never explored. This establishes model-relative impossibility rather than inferring it from unsuccessful play (\S\ref{sec:sc25}).

    \item \textbf{Inspectable models and traceable proofs.}
    Typed objects and named rules make learned dynamics understandable and their predictions checkable against observations. Lean replay connects the formal representation to recorded behavior, while explicit proof dependencies identify the rules on which a conclusion rests, supporting targeted empirical examination and revision (\S\ref{sec:q1}; \S\ref{sec:repair}).
\end{itemize}
\section{Method}
\label{sec:method}\label{sec:awm}\label{sec:construction}

AWM aims to turn interaction with an unknown environment into a mathematical model that supports rigorous reasoning about both reachability and impossibility. This requires more than collecting rules that explain observed transitions: the rules must form a coherent representation of the environment, and that representation must support reasoning about action sequences the agent has never explored.

We first describe the target model (\S\ref{sec:modelstructure}), then present the three stages that construct and use it (Figure~\ref{fig:steps}): Explore \& Prototype develops executable hypotheses through interaction; Structure \& Simplify organizes them into a compact algebraic model; and Formalize \& Prove completes the model in Lean~4 and establishes machine-checked results.

\subsection{The Algebraic World Model}
\label{sec:modelstructure}

An algebraic world model describes an environment through typed objects, rules, and a goal predicate, represented in executable Python and formal Lean~4 to support both empirical testing and mathematical reasoning.

\paragraph{Objects.}
Object types define the structure and validity of states, providing a common vocabulary for describing the environment. A parser maps observations, together with any required history or auxiliary information, to typed states $w$.

\paragraph{Rules.}
Rules describe how actions transform these states. Each Python rule specifies an applicability scope and a checker that reports agreement, contradiction, or non-applicability on observed transitions; effect rules also predict successor states or their components. Together, the rules define legal actions, their effects and costs $\mathrm{cost}(w,a)$, while the goal predicate identifies successful states.

Related mechanisms can share a parameterized operation $r(\theta)$, with $\theta$ determined by the state. This allows common properties to be expressed and proved uniformly across different mechanisms.

\paragraph{Formal representation.}
The same model is formalized in Lean~4, with objects and rules presented as projective and inductive limits whose universal properties the kernel checks, and properties of their dynamics expressed as theorems. Its executable representation supports testing against observations, while its formal representation supports kernel-checked proofs over legal action sequences (\S\ref{sec:rules2axioms}).

\subsection{Explore \& Prototype}
\label{sec:propose}

\paragraph{Setting.}
We consider deterministic, goal-directed environments with partial observations, modeled in the POMDP setting \citep{kaelbling1998pomdp}. At step $t$, the agent observes $o_t$, takes an action $a_t$, and receives $o_{t+1}$ without access to the underlying state or transition function. Its experience forms a corpus $\mathcal{D}_t$ of distinct transitions $x=(o,a,o')$.

\paragraph{Building and testing hypotheses.}
AWM actively explores the environment and maintains a provisional executable model as a \emph{concept book}. Each concept is a Python hypothesis about an environmental mechanism, containing a natural-language claim, an applicability scope, supporting transitions, a checker, and an optional predictor.

For a concept $c$, its checker returns $v_c(x)$: \code{TRUE} if the transition agrees with the hypothesis, \code{FALSE} if it contradicts it, and \code{NA} if it falls outside its scope. The optional predictor specifies expected action effects for comparison with subsequent observations.

After each action batch, AWM rechecks the concept book $\mathcal{C}_t$ against the expanded corpus $\mathcal{D}_t$. Further interaction is allowed only when
\begin{equation}\label{eq:gate}
\forall c\in\mathcal{C}_t,\ \forall x\in\mathcal{D}_t:
\quad v_c(x)\notin\{\mathsf{FALSE},\mathsf{ERR}\},
\end{equation}
where $\mathsf{ERR}$ denotes checker failure, including exceptions and timeouts.

This recheck ensures that contradicted or failing concepts must be revised or withdrawn before exploration continues. The agent then keeps revising its hypotheses and uses them to plan toward inferred goals or probe competing explanations.

The resulting prototype captures tested mechanisms but may describe related behaviors independently, motivating the next stage.

\subsection{Structure \& Simplify}
\label{sec:consolidate}

\begin{figure}[t]
\centering
\includegraphics[width=\textwidth]{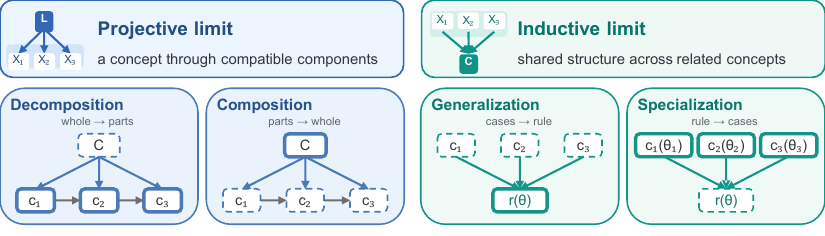}
\caption{Four transformations guided by projective and inductive limits, applied to object types and rules. Dashed boxes mark removed components and bold boxes mark introduced components; arrows indicate relationships between components. Generalization uses $r(\theta)$, and an instance is written $r(\theta_i)$; specialization makes distinctions explicit through case-specific types or parameters.}
\label{fig:condensation}
\end{figure}

Structure \& Simplify turns the concept book's fragmented, overlapping descriptions into a compact model by identifying shared structure and reorganizing components through their relationships. AWM first extracts the required object types, constructs a parser from observations to typed states, and rewrites concepts as rules over these objects. It then restructures these \emph{components}---object types and rules---through four \emph{category-theory-guided} transformations. A \emph{projective limit} (limit) assembles compatible descriptions into a coherent whole, while an \emph{inductive limit} (colimit) glues objects along shared structure \citep{maclane1998categories,kashiwara2006categories}. \citet{yuan2023succinct} proposes representing a concept as such a limit of simpler concepts and verifying the representation at each node; AWM is a concrete implementation of this idea for learned world models, in which the two perspectives guide transformations based on compatibility and shared structure (Figure~\ref{fig:condensation}), each with an auxiliary Lean statement about cones, cocones and universal properties checked by the kernel (Appendix~\ref{app:limits}).

Following projective limits, \emph{decomposition} expresses a complex component through simpler components and their compatibility relations; Lean checks that it is the limit of the resulting diagram. Conversely, \emph{composition} expresses a component as the limit of compatible existing components, which Lean checks, moving the composite into the derived layer when a separate primitive definition is unnecessary.

Following inductive limits, \emph{generalization} extracts shared structure from multiple components, expressing source rules $c_i$ through a parameterized operation $r(\theta_i)$ with state-dependent $\theta_i$; Lean checks that $r$ is the copairing $[c_i]$ out of the coproduct of cases. Conversely, \emph{specialization} refines overly general components into more specific cases or introduces parameters to reduce exceptions and redundancy; Lean checks that the cases form a coproduct.

\paragraph{Preserving tested distinctions.}
Every transformation must compile its auxiliary Lean statement and preserve verdicts on recorded transitions. For rules, AWM also uses a diagnostic set $\mathcal{N}$ of perturbed recorded transitions to check whether simplification loses previously detected rejections.

Define $\mathrm{Rej}(c)=\{x\in\mathcal{N}:v_c(x)=\mathsf{FALSE}\}$ and $\mathrm{Rej}(S)=\bigcup_{c\in S}\mathrm{Rej}(c)$. Replacing rules $S$ with $T$ must preserve coverage of recorded transitions and satisfy
\begin{equation}\label{eq:unify}
\mathrm{Rej}(T)\supseteq\mathrm{Rej}(S).
\end{equation}
For object types, which have no checkers, transformations are evaluated by distinctions between recorded states, using state pairs as witnesses, and by rule verdicts once rules are available.

AWM applies these transformations and tests to object types, then rules, and finally rechecks object types against the settled rules.
\label{sec:algebraicstructure}

\subsection{Formalize \& Prove}
\label{sec:rules2axioms}

AWM completes the model in Lean~4 from the simplified objects and rules. Both are presented by projective and inductive limits (\S\ref{sec:modelstructure}); in \code{lp85}, a world is a pullback, the goal an equalizer, and a rule defined by cases a copairing out of a coproduct (Appendix~\ref{app:limits}). Rules are \code{def} definitions and derived properties \code{theorem} statements, with auxiliary proofs from Structure \& Simplify kept separate. The transition function dispatches to named rules, while unresolved assumptions remain explicit theorem hypotheses rather than unchecked \code{axiom} declarations. Replaying recorded winning trajectories, the Lean model reproduces observed behavior on every component it represents, grounding its formal definitions in the dynamics the agent encountered. This enables theorems about all legal action sequences under the learned model, extending formal reasoning beyond the trajectories actually explored.

\paragraph{Proving reachability and impossibility.}
\label{sec:modelreasoning}
A level is \emph{winnable} in the model if a legal action sequence reaches the goal. AWM proves reachability by constructing a witness checked in Lean. Impossibility can arise because the goal is unreachable, or because reaching it requires more cost than the available budget. AWM proves the former using an invariant that excludes the goal, and the latter using a lower bound on the cost of any winning sequence. For budgeted tasks, a potential $V$ estimates the minimum remaining cost from each state and satisfies
\begin{equation}\label{eq:potential}
V(w)\le\mathrm{cost}(w,a)+V(w')
\quad\text{for every legal transition},
\qquad V(w)\le0\quad\text{on the goal},
\end{equation}
so every winning sequence costs at least $V(w_0)$, proving impossibility when this exceeds the budget. All proofs are closed and checked by the Lean kernel, without unproved placeholders (\code{sorry}) or kernel-bypassing computation (\code{native\_decide}), establishing rigorous guarantees under the learned model's explicit premises. More generally, we have the following definition for proving impossibilities. 

\begin{definition}[Sufficient cover]\label{def:coverage}
Given a specific task $T$, 
a learned model is a \emph{sufficient cover} of $T$,  if every optimal winning sequence in the environment is covered in the model, i.e.,
the transitions in the winning sequence can be  reproduced step by step in the model. 
\end{definition}

Under sufficient cover, impossibility proofs in the model transfer to the environment (Appendix~\ref{app:lean}, Theorem~A.1).

\subsection{Example: one rule for four mechanisms}
\label{sec:case}

In game \code{lp85}, buttons move colored tiles, and a level is won when every slot holds its matching color. During Explore \& Prototype, the agent discovers eleven concepts describing four button mechanisms: belts, stars, hubs, and panels (Figure~\ref{fig:algebra}a). In Structure \& Simplify, AWM applies the generalization transformation to recognize that all four mechanisms permute tile positions, replacing the eleven concepts with a shared operation, \code{press\_permutes}, parameterized by the pressed button. On the diagnostic set used for this merge, the shared rule retains all 55 rejections of the source concepts and rejects 21 additional corrupted transitions. The shared rule also shortens reasoning: because one color-count invariant covers every button, proofs that a variant is unsolvable need two to three times fewer lemmas than proofs over the eleven separate concepts (Appendix~\ref{app:e3}). A separate check tests whether such a model-level proof also holds for the game: on the level-6 tile-swap variant U, Lean verifies its unsolvability proof against an engine-derived graph covering all 52,500 transitions among 7,500 configurations (Appendix~\ref{app:engine-proof}).

In Formalize \& Prove, Lean certifies a five-click winning sequence (Figure~\ref{fig:algebra}b) and proves a recolored variant unwinnable using the color-count invariant shared by all button permutations (Figure~\ref{fig:algebra}c). The final model contains six object types and six rules, and Lean replays all 199 recorded transitions without mismatch.

\begin{figure}[t]
\centering
\includegraphics[width=0.92\textwidth]{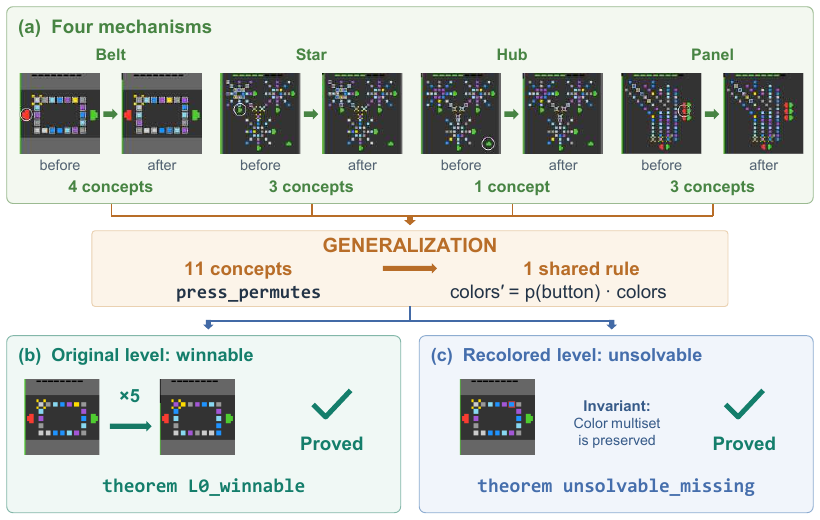}
\caption{
From shared structure to formal proofs in \code{lp85}.
(a) Eleven concepts describing four button mechanisms are generalized into a shared permutation rule. Each before/after pair shows a recorded press, with the pressed button circled and changed tiles outlined.
(b) A five-click sequence witnesses reachability.
(c) Recoloring the only yellow tile makes the goal unreachable, since every press preserves color counts.
Both conclusions are formally proved in Lean~4.
}
\label{fig:algebra}\label{fig:lp85}
\end{figure}

\section{Twenty-five public games}
\label{sec:q1}

ARC-AGI-3 \citep{foundation2026arcagi3} scores each level by the squared ratio of a human baseline's action count to the agent's, weights later levels more, and gives an uncleared level zero. We run AWM with Claude Opus~5 on all 25 public games (183 levels), starting each game with an empty concept book. AWM clears every level with a score of 99.9 and the fewest certified actions among the systems reporting this metric in Table~\ref{tab:scores}. We further examine whether the resulting models are compact, reproduce observed behavior in Lean, and generalize beyond the trajectories from which they were learned.

\begin{table}[t]
\centering\footnotesize
\setlength{\tabcolsep}{4pt}
\caption{Official scores on the 25 public ARC-AGI-3 games. Claude Opus~5 and GPT-6 Astra run under the official standard harness, which provides no code execution; the other systems execute code. AWM's certified clears follow uncertified sessions of the same run in 22 games. The actions for Rodionov's agent are from its ARC Prize scorecard and Memento~3's as reported in~\citep{zhao2026memento3}.}
\label{tab:scores}
\begin{tabular}{@{}lcrrrr@{}}
\toprule
System & Code & \begin{tabular}[b]{@{}c@{}}Mean\\score\end{tabular} & \begin{tabular}[b]{@{}c@{}}Games\\at 100\end{tabular} & \begin{tabular}[b]{@{}c@{}}Certified\\actions\end{tabular} & \begin{tabular}[b]{@{}c@{}}Actions\\/ human\end{tabular} \\
\midrule
Claude Opus~5 & no & 40.7 & 5 & --- & --- \\
GPT-6 Astra & no & 68.3 & 10 & --- & --- \\
GPT-6 Astra (adapter) \citep{arcprize2026astraresults} & yes & 100.0 & 25 & --- & --- \\
\citet{rodionov2026need} & yes & 99.0 & 23 & 8{,}347 & 0.49 \\
Memento~3 \citep{zhao2026memento3} & yes & 100.0 & 25 & 7{,}518 & 0.44 \\
AWM (ours) & yes & 99.9 & 23 & \textbf{7{,}448} & \textbf{0.43} \\
\bottomrule
\end{tabular}

\medskip
\caption{Formalization and verification of the 25 learned world models in Lean~4. Axioms are those Lean's own logic uses; no proof adds axioms of its own.}
\label{tab:lean}
\begin{tabular}{@{}cccc@{}}
\toprule
Levels replayed & Rules formalized in Lean & Theorems & Axioms \\
\midrule
183 / 183 & 220 & 1{,}374 & only \code{propext}, \code{Classical.choice}, \code{Quot.sound} \\
\bottomrule
\end{tabular}
\end{table}

\paragraph{Compact models that reproduce observed behavior.}
Across the 25 games, AWM records 742 executable concepts and withdraws 143 during model revision. The retained concepts are tested against the recorded transitions. Structure \& Simplify organizes the rest into 170 object types and 283 rules, a median of 6 types and 10 rules per game. Shared patterns include translations, permutations and goal conditions. Lean replays the winning trajectory of every one of the 183 levels from its initial state and verifies its final state against the modeled goal (Table~\ref{tab:lean}).

\paragraph{Generalization beyond observed transitions.}
We next test the learned rules on behavior the record never showed. We freeze the models of \code{lp85} and \code{ft09} after exploration and evaluate them on transitions from random play with previously unseen screens (seeds 0--9; Appendix~\ref{app:unseen}). The final frozen rules accept every valid transition in the new-trajectory sets. On the two modified \code{lp85} layout sets, they predict the next tile layout in all 4,160 cases; on the two changed-mechanic \code{ft09} sets, they reject all 4,031 transitions. On a third game, \code{ls20}, the same evaluation accepts 99.47--99.95\% of valid transitions on previously unseen screens with the final rules, compared with 67--70\% with the concepts before Structure \& Simplify (Appendix~\ref{app:unseen}).

The changed mechanic modifies \code{ft09}'s hidden click pattern, causing each click to change 110 or 146 cells instead of 38; rejecting these transitions shows that the rules detect behavior inconsistent with the learned dynamics.

\paragraph{Clearing the games with efficient solutions.}
The same models are effective for play. AWM with Opus~5 achieves a score of 99.9, compared with 40.7 for Opus~5 under the official standard harness without code execution. It scores 100 on 23 games and at least 97.6 on every game, performing competitively with leading code-executing systems (Table~\ref{tab:scores}). Its certified clears require 7{,}448 actions, only 43\% of the human baseline's 17{,}135 and the fewest among systems reporting this metric, compared with 7{,}518 for Memento~3 \citep{zhao2026memento3} and 8{,}347 for \citet{rodionov2026need}. Including exploration and other sessions, AWM uses 24{,}604 engine actions in total (Appendix~\ref{app:scores}). These results show that AWM can construct and use explicit world models while retaining competitive solution efficiency. \S\ref{sec:sc25} next examines whether such models also support formal proofs of impossibility.

\section{Proving impossibility}
\label{sec:sc25}

A winning trajectory establishes reachability; impossibility requires excluding every legal winning sequence within the budget, including those never explored. We test whether AWM's learned algebraic models enable such exhaustive proofs, and whether those proofs hold for the game itself.

\subsection{The game, the seal, and the baselines}
\label{sec:gt}\label{sec:sc25arms}

In \code{sc25}, a block moves on a grid and casts spells through a $3\times3$ panel to reach the exit within the level's budget. Moves and panel clicks cost one unit, completing a spell one more, fuel refunds ten, and the winning move is free. We construct two variants of levels~4 and~5 by rearranging selected elements while preserving the game mechanics, then seal them at budgets of 18 and~35. Exhaustive search on the game engine (the underlying game program), covering all four initial facings and all 13 actions, establishes minimum winning costs of 19 and~36, respectively: neither sealed level can be won, while one additional unit permits a win.

Claude Opus~5.5 and GPT-6 Astra receive the game interface and coding tools, including simulator construction and search; AWM uses Opus~5.5 with the procedure of \S\ref{sec:method} and a generic Lean skeleton. Each condition has three runs per level, with up to eight reset-delimited attempts per run, ending at game over or recorded abandonment. After each attempt, every agent is asked whether the sealed level can be won and to justify its verdict; AWM constructs Lean proofs under its learned model. Impossibility can be established by proving that the minimum winning cost exceeds the budget. We evaluate proof validity under the learned model separately from numerical agreement with the engine.

\subsection{Certified lower bounds}
\label{sec:results}\label{sec:sc25l4}\label{sec:sc25l5}\label{sec:sc25proof}

\begin{table}[t]
\centering\footnotesize
\caption{Impossibility reasoning across six runs per agent. Bound accuracy is evaluated against the games' minimum winning costs before and after feedback-guided repair. First claim averages the earliest impossibility claim among runs making one, including later withdrawals; it is not proof\mbox{-}completion time. Per-run results are in Appendix~\ref{app:mainruns}.}
\label{tab:verdicts}\label{tab:l5verdicts}
\setlength{\tabcolsep}{4.5pt}
\begin{tabular}{@{}lcccc@{}}
\toprule
Agent &
\begin{tabular}[b]{@{}c@{}}Kernel-checked\\model proofs\end{tabular} &
\begin{tabular}[b]{@{}c@{}}Runs claiming\\impossible\end{tabular} &
\begin{tabular}[b]{@{}c@{}}Correct bounds\\initial $\to$ feedback\end{tabular} &
\begin{tabular}[b]{@{}c@{}}First claim\\(attempt)\end{tabular} \\
\midrule
Claude Opus~5.5 & 0/6 & 3/6 & 1/6 $\to$ 1/6 & 5.3 \\
GPT-6 Astra & 0/6 & 2/6 & 0/6 $\to$ 2/6 & 8.0 \\
\textbf{AWM} & \textbf{6/6} & \textbf{6/6} & \textbf{4/6 $\to$ 6/6} & \textbf{2.0} \\
\bottomrule
\end{tabular}
\end{table}

\paragraph{Proving beyond explored trajectories.}
All six AWM runs produce closed, kernel-checked Lean proofs of impossibility under their learned models and explicit premises, while none of the twelve baseline runs produces such a proof (Table~\ref{tab:verdicts}). Crucially, these proofs exclude all legal winning sequences within the budget under the learned models, including those never explored. For example, an AWM run for level~4 proves
\begin{equation}\label{eq:sealed-certificate}
\forall B\le18,\ \forall\mathbf{a},\quad
\mathrm{Won}(\mathrm{run}_{B}(w_0,\mathbf{a}))=\mathsf{false},
\end{equation}
and verifies a winning witness at $B=19$. Together, the lower bound and witness certify a tight threshold under the model, whereas finding a winning route alone cannot exclude cheaper alternatives.

AWM establishes such bounds through potential or closure certificates over arbitrary legal action lists. Certificate data may come from any search; Lean checks that the certificate proves the stated bound under the formal model, so the proof need not trust the search implementation. We conduct additional tests showing that our Lean-based verification procedure can detect errors such as reversed bounds, Python--Lean mismatches, and vacuous theorem statements (Appendix~\ref{app:lean-python}).

\paragraph{Beyond unproved numerical judgments.}
Only five of twelve baseline runs claim impossibility, four with incorrect bounds. Even the sole correct Claude Opus~5.5 bound lists ``that my hand-counted path lengths are truly minimal'' as \emph{assumed, not checked}. Finding a winning route establishes an upper bound on the minimum cost, not its optimality. AWM instead provides kernel-checked lower-bound proofs under its learned models.

Figure~\ref{fig:l4attempts} illustrates this distinction on L4. Astra finds a winning route costing 22 and assumes it is optimal without proof, incorrectly ruling out wins at budgets 19--21. In contrast, AWM formally proves a lower bound of 19 under its learned model and verifies a matching winning witness, establishing a tight bound that agrees with the game's minimum winning cost.

\paragraph{Proof validity and bound accuracy are distinct.}
Before feedback, four of six AWM bounds are correct, compared with one for Opus~5.5 and none for GPT-6 Astra. 

Some runs identify a correct winning sequence but charge the otherwise free winning move, reporting a bound one unit higher. Because this cost cannot be observed on sealed levels, we count such bounds as correct for all agents only when the discrepancy is explicitly attributable to this accounting convention, not to an incorrect route or cost estimate.

The remaining two AWM bounds, 20 on L4 and 47 on L5, arise from overly restrictive learned rules that omit transitions used by the game's cheapest winning routes, violating sufficient cover (\S\ref{sec:rules2axioms}). Their proofs remain valid under those models, although their numerical thresholds disagree with the game.

\begin{figure}[t]
\centering
\includegraphics[width=\textwidth]{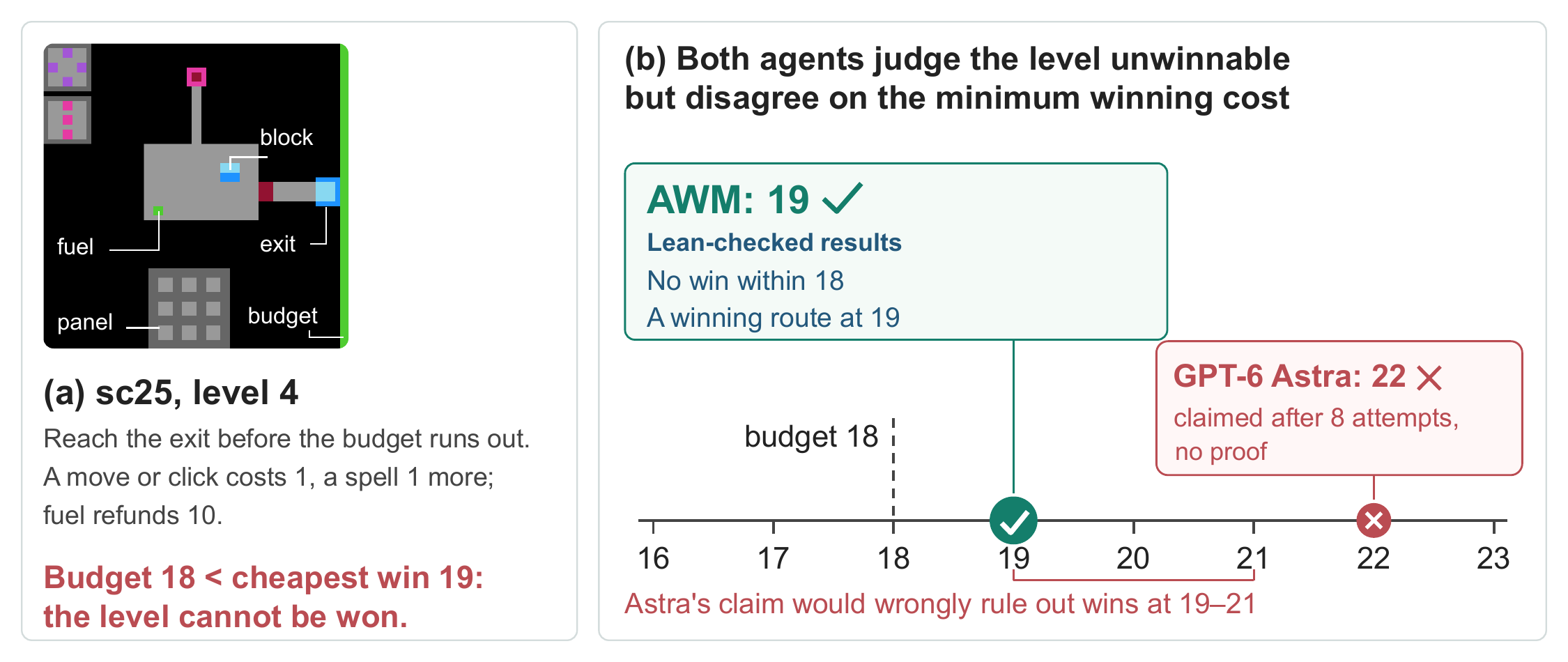}
\caption{Sealed sc25 level 4. AWM proves in Lean that the minimum winning cost under its learned model is 19, one unit above the budget, and verifies a route at that cost; GPT-6 Astra claims 22 without a proof.}
\label{fig:l4attempts}
\end{figure}

\subsection{Auditing premises and proving again}
\label{sec:repair}\label{sec:validproof}

After the initial evaluation, the two AWM runs with bounds of 20 and 47 receive feedback only that the game permits a cheaper win, without revealing its cost, route or the faulty rule. AWM revisits its explicit rules, tests alternative formulations through interaction, and constructs new proofs under the refined models (Appendix~\ref{app:repair}).

On L4, AWM discovers that growing onto fuel collects it, reducing the bound from 20 to~19. On L5, it identifies terrain rather than limited range as the cause of a failed shot, reducing the bound from 47 to 37 under its convention of charging the winning move, equivalent to the game's minimum cost of 36. Both refinements succeed within four additional attempts, each yielding a new closed Lean proof. With this feedback, AWM reaches correct bounds in all six runs.

We provide the same discrepancy feedback and additional attempt budget to the four baseline runs with incorrect bounds. Astra corrects both bounds but produces no formal proofs, while Claude Opus~5.5 corrects neither.

\paragraph{Proof dependencies help locate faulty rules.}
Both repairs corrected rules used by the original proofs. To test whether proof dependencies help locate such errors, we inject a single incorrect rule into a reference model and give Claude Opus~5.5 the faulty model, its Lean proof, and access to the game. The goal is to find a transition where the model's prediction disagrees with the game. On L4, Claude succeeds in 16 of 42 runs without explicit proof dependencies. Providing the rules used by the proof, without revealing which rule is faulty, increases this to 29 of 42 runs. Even when directly told which rule is faulty, Claude finds a counterexample in only 30 of 42 runs. Thus, proof dependencies substantially improve error discovery, nearly matching the benefit of identifying the faulty rule directly (Appendix~\ref{app:e1}).

\subsection{From model proofs to the engine}
\label{sec:enginetransfer}

Matching the game's minimum winning cost provides evidence of accurate reasoning, but does not establish that a formal proof under the learned model also holds for the game. To transfer such a proof, the learned model need not cover every game transition; by Definition~\ref{def:coverage}, it suffices that it reproduce the game's optimal winning sequences. For the budgeted tasks here, we verify sufficient certificate-level conditions on engine-derived graphs, which preserve budget feasibility and the proof invariants (Lemmas~\ref{lem:budget-transfer}--\ref{lem:invariant-transfer}).

We exhaustively check the engines’ budget-feasible transitions from the fixed opening against the potential inequalities and invariant conditions used in the model proofs. These checks establish that five of the six model-level impossibility proofs can be extended to the original games. Appendix~\ref{app:transfer} gives the per-run transfer arguments, including the completed rules and certificates.

\section{Related work}
\label{sec:related}

\paragraph{Executable world models.}
Code world models are learned from interaction for planning \citep{tang2024worldcoder}, refined by generation, testing and repair \citep{dainese2024cwm} or composed from programmatic experts \citep{piriyakulkij2025poeworld}; theory-based agents explore and plan with learned theories \citep{tsividis2021theorybased}, and TheoryCoder-2 learns reusable abstractions for hierarchical planning \citep{ahmed2026abstractions}. Such models are central to recent ARC-AGI-3 agents \citep{rodionov2026executable,courtis2026opineworld,rodionov2026need}: Twin does not act until its simulator reproduces every observed transition \citep{skoutnev2026twin}, and Tycho finds that a closer match does not by itself yield better play \citep{lehmann2026tycho}. The closest work, Memento~3, compiles a natural-language rulebook into code and accepts an update only if an LLM judges it faithful and replay is exact \citep{zhao2026memento3}. Yet a code world model can pass a sampled-transition check at 100\% accuracy and still lose because it misses a key rule \citep{aguilar2026verified}. AWM instead abstracts its rules into a shared structure, tests them beyond the record, and proves reachability and impossibility, checking the proofs' premises against the game.

\paragraph{Certified impossibility.}
Unsolvability certificates let an independent checker confirm that a planning task has no plan \citep{eriksson2017unsolvability}, in the spirit of certifying algorithms \citep{mcconnell2011certifying}, and certificate checking has been formally verified for temporal planning \citep{wang2025verified}; all start from a supplied model. Impossibility transfers from an abstraction that covers every environment transition, with preserved initialization and goals \citep{clarke1994abstraction}; Definition~\ref{def:coverage} requires only that the model reproduce the optimal winning sequences; action-model learning gives conditions under which learned models support such transfer \citep{aineto2024guarantees}. AWM learns its model from interaction; Lean checks every deduction from that model, and interaction checks the model's rules against the game.

\paragraph{Formal reasoning.}
Neural provers produce Lean proofs at olympiad level \citep{achim2025aristotle,chen2025seedprover,hubert2025olympiad,ren2025deepseekproverv2}, AI systems have produced Lean proofs of open Erd\H{o}s problems \citep{sothanaphan2026erdos728,tsoukalas2026formalsearch}, and autoformalization translates informal statements into formal ones \citep{wu2022autoformalization}. There the objects of proof are given mathematics; AWM must first infer the dynamics and costs its proofs quantify over, which makes the fidelity of its formal model an empirical question.
Large-scale AI-assisted software verification proves properties of compiled programs relative to machine semantics and theorem statements, and can expose bugs during proof construction \citep{lucas2026formalverification}. That work also studies Claude-generated x86 semantics; AWM learns the objects, action rules and goals of an unfamiliar interactive environment.

\section{Limitations and conclusion}
\label{sec:conclusion}

\paragraph{Limitations.}
We demonstrate AWM on ARC-like environments with discrete actions and immediate, observable feedback, where learned rules can be checked against recorded transitions; richer hidden state, delayed feedback and stochastic dynamics remain to be tested. Lean guarantees the correctness of proofs under the learned model's explicit premises, and whether these premises capture the environment is tested by interaction: the explicit rules and kernel-checked proofs separate errors in the learned model from errors in reasoning, so a discrepancy can be traced to its assumption and addressed by further exploration. Broader environments and tasks are the natural next test of such formal reasoning.

\paragraph{Conclusion.}
Formal reasoning traditionally begins with predefined objects and rules. AWM takes a step toward reasoning rigorously about unknown worlds by discovering these structures through interaction, organizing them into compact algebraic models, and formalizing them in Lean. The resulting models make learned mechanisms explicit and their behavior checkable step by step, while supporting kernel-checked proofs of reachability and impossibility, including claims about action sequences never explored. By making the assumptions behind these proofs explicit, AWM also allows discrepancies with the environment to be traced to learned rules and addressed through further exploration. Together, these capabilities bridge empirical exploration and formal reasoning, extending mathematical proof beyond problems whose definitions are given in advance.

\subsection*{Reproducibility Statement}

Our interactive website presents the models, tests, Lean proofs,
and links to official replays for all 25 public games:
\url{https://ephemeral6.github.io/awm-site/}.
The Lean model of \code{lp85}, our running example, is also
included in the supplementary materials (\code{anc/lp85-lean}),
together with scripts for building the project and auditing
its axioms.
The website also provides the results and supporting materials
for the sealed-level experiments, with per-run results reported
in Appendix~\ref{app:mainruns}.

\section*{Statement on AI use}
The agents studied in this paper are LLM agents, so AI use is part of the method. Generative
AI was also used to draft and edit the text and to write the analysis scripts. All
reported numbers come from script outputs over the recorded runs, not from
model-generated text. The Lean proofs are checked by the Lean kernel, independently of
any model.

\bibliographystyle{iclr2027_conference}
\bibliography{Reference}

\clearpage
\appendix
\newcounter{apppar}[section]
\renewcommand{\theapppar}{\thesection.\arabic{apppar}}
\newcommand{\apppar}[2]{\refstepcounter{apppar}\paragraph{\theapppar\ #1}\label{#2}}
\section{Method details}
\label{app:method}
This appendix gives the definitions behind \S\ref{sec:method}: concepts and the gate (\S\ref{sec:propose}), the simplification test (\S\ref{sec:consolidate}), and the Lean statements, certificates and their transfer to the environment (\S\ref{sec:rules2axioms}).

\apppar{Environment, concepts and the gate.}{app:gate}
We model an environment as a deterministic, goal-directed special case of a partially observable Markov decision process \citep{kaelbling1998pomdp}: states $\mathcal{S}$, actions $\mathcal{A}$, observations $\mathcal{O}$, a transition function $T:\mathcal{S}\times\mathcal{A}\to\mathcal{S}$, an observation function $\rho:\mathcal{S}\to\mathcal{O}$ and goal states $G\subseteq\mathcal{S}$. The agent sees $o_t=\rho(s_t)$ and receives a signal when a level is cleared. A transition is a triple $x=(o,a,o')$, and a game's transition corpus $\mathcal{D}$ is the set of distinct transitions across its runs, deduplicated by content hash.

A concept $c=(\sigma_c,\alpha_c,\gamma_c,E_c)$ has an optional executable scope $\sigma_c$, an assertion $\alpha_c$ returning \textsc{true}, \textsc{false} or \textsc{na}, an optional predictor $\gamma_c$, and a non-empty evidence set $E_c$. When the scope rejects $x$ the verdict $v_c(x)$ is $\mathsf{NA}$, otherwise it is $\alpha_c(x)$; an exception, an invalid return value or a timeout gives $\mathsf{ERR}$. A concept is scored only on transitions outside $E_c$. The gate of Eq.~\eqref{eq:gate} is enforced on the action command: it refuses the next action while the latest settlement has a $\mathsf{FALSE}$ or $\mathsf{ERR}$ verdict on any $x\in\mathcal{D}$, including the concept's own evidence. Figure~\ref{fig:concept} shows one concept.

\begin{figure}[!htbp]
\centering
\includegraphics[width=\linewidth]{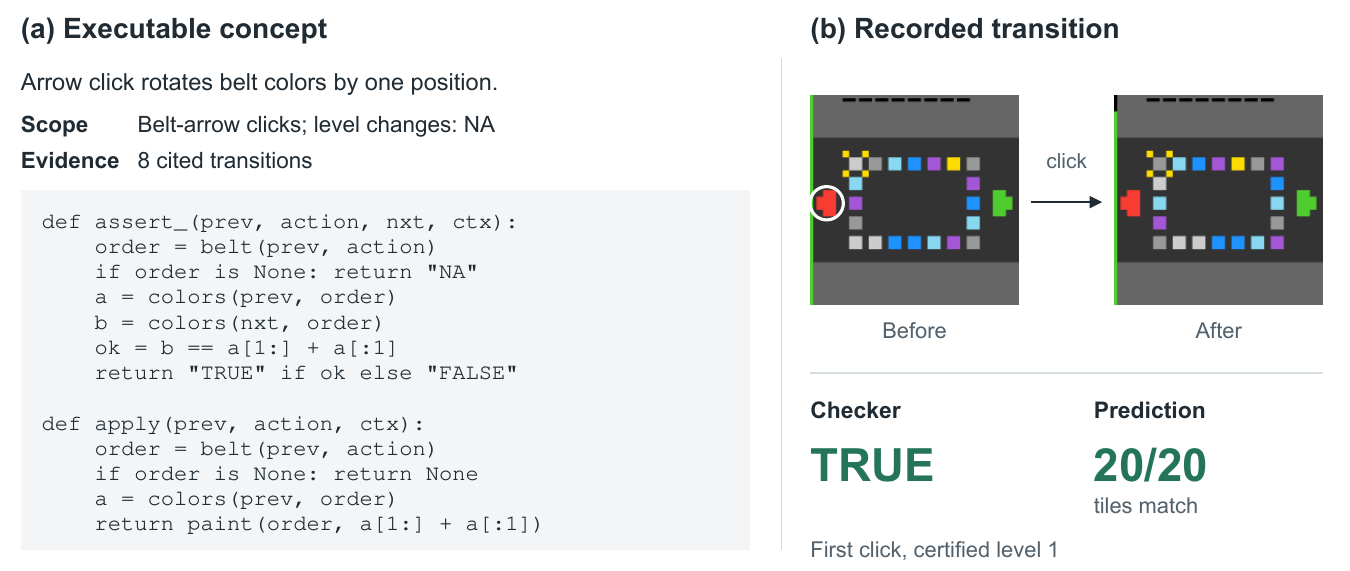}
\caption{An executable concept in \code{lp85}. (a)~Scope, evidence and abridged code for a click on an arrow pointing against its belt's order; helper names are simplified. (b)~On the first click of the certified level-1 run, the checker returns \code{TRUE} and the prediction matches all 20 tiles.}
\label{fig:concept}
\end{figure}

\apppar{Objects, rules and the simplification test.}{app:simplify}
A partial parser $\pi:\widetilde{\mathcal{O}}\rightharpoonup\mathcal{W}$ maps an observation, augmented by any explicitly supplied history or auxiliary information, to a world of typed objects; $\pi(o)$ abbreviates parsing with that context. Pixel interpretation is confined to the parser, and parse failures are reported rather than guessed. A rule $r=(\sigma_r,\alpha_r)$ has a scope and an assertion on $(\pi(o),a,\pi(o'))$. The diagnostic set $\mathcal{N}$ of Eq.~\eqref{eq:unify} contains at most 120 corrupted transitions per game, with a fixed seed. Each replaces part of a recorded transition $(o,a,o')$ in one of five ways: the successor is $o$ itself; the successor of the same action on another transition; the predecessor of another transition; $o'$ with a few cells changed that do vary in the corpus; or another transition's $o$. We set
\begin{equation}\label{eq:A-rej}
\mathrm{Rej}(r)=\{m\in\mathcal{N}: v_r(m)=\mathsf{FALSE}\}. \tag{A.1}
\end{equation}
The following is an auxiliary diagnostic within the four transformations, not a separate operation or a sufficient deletion criterion: rule $r$ has no unique rejection on $\mathcal{N}$ when
\begin{equation}\label{eq:A-elim}
\mathrm{Rej}(r)\subseteq\textstyle\bigcup_{r'\neq r}\mathrm{Rej}(r'). \tag{A.2}
\end{equation}
Any removal must preserve model behavior and pass the same before/after checks; an analogous field diagnostic fixes a field to a constant and compares verdicts on $\mathcal{D}\cup\mathcal{N}$. Figure~\ref{fig:merge} shows the test on \code{lp85}.

\begin{figure}[!htbp]
\centering
\includegraphics[width=\linewidth]{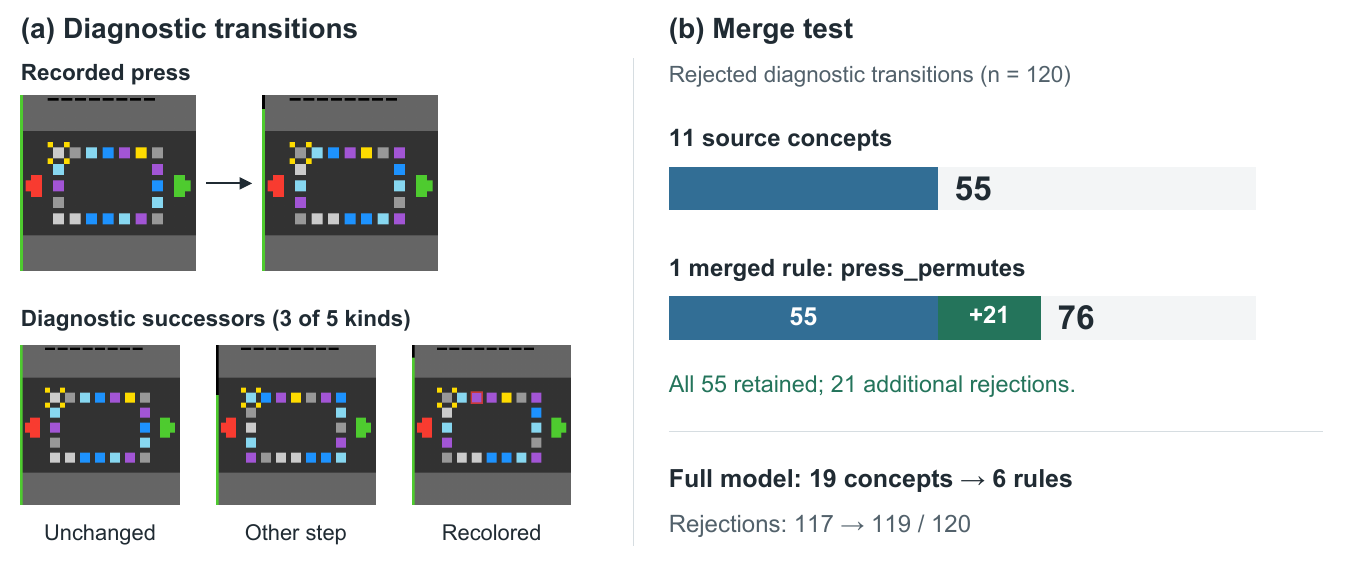}
\caption{Testing a merge on \code{lp85}. (a)~A recorded press and three of the five diagnostic constructions. (b)~On the same 120 diagnostic transitions, \code{press\_permutes} preserves all 55 rejections of its eleven source concepts and adds 21. Lack of unique rejections alone does not justify deletion: \code{dead\_click}, \code{board\_static} and \code{bar\_never\_drops} are retained based on further tests.}
\label{fig:merge}
\end{figure}

\apppar{Concepts as limits: presentations, transformations and their certificates.}{app:limits}
This paragraph states the definitions behind \S\ref{sec:consolidate}. They follow the logic of \citet{yuan2023succinct}: a concept is represented as a projective or inductive limit of simpler concepts; a representation is valid if it gives the same result as the concept on every task; representing the parts in turn yields a hierarchy whose leaves are basic concepts; and each node of the hierarchy is verified. That work uses formal limits in categories of presheaves and a verifier that may be a stronger model or a person. AWM implements the idea concretely: the concepts are the object types and rules of the model, the category is $\mathbf{Set}$ (in Lean, types and functions), which is complete and cocomplete, so every representation is an actual limit or colimit; the tasks are the checks on recorded transitions; and the verifier at each node is the Lean kernel. As in \citet{maclane1998categories} and \citet{kashiwara2006categories}, projective and inductive limits are limits and colimits.

\paragraph{Limits.}
\begin{definition}[Diagram, cone, limit]\label{def:A-limit}
A diagram is a functor $D\colon J\to\mathbf{Set}$ from a small category $J$. A cone over $D$ is a set $L$ with maps $\pi_j\colon L\to D(j)$, $j\in J$, such that $D(f)\circ\pi_i=\pi_j$ for every $f\colon i\to j$. It is a limit (projective limit) if for every cone $(L',\pi')$ there is a unique $u\colon L'\to L$ with $\pi_j\circ u=\pi'_j$ for all $j$. Dually, a cocone is a set $C$ with maps $\iota_j\colon D(j)\to C$ such that $\iota_j\circ D(f)=\iota_i$ for every $f\colon i\to j$, and it is a colimit (inductive limit) if every cocone $(C',\iota')$ factors as $\iota'_j=u\circ\iota_j$ through a unique $u\colon C\to C'$.
\end{definition}

In $\mathbf{Set}$, $\lim D$ is the set of compatible families $\{(x_j)\in\prod_j D(j): D(f)(x_i)=x_j\ \text{for all}\ f\colon i\to j\}$, and $\operatorname{colim}D$ is $\coprod_j D(j)$ modulo the equivalence relation generated by $(i,x)\sim(j,D(f)(x))$. The shapes used here are those of the standard examples: a product and a coproduct (a discrete diagram); a pullback (a cospan $\bullet\to\bullet\leftarrow\bullet$), which aligns two components on a shared quantity; an equalizer (a parallel pair), such as the subset $\{x\in X:\chi(x)=\mathsf{true}\}$ cut out by $\chi\colon X\to\mathsf{Bool}$ and the constant map $\mathsf{true}$; and a coequalizer, such as the quotient of $X$ by the equivalence relation generated by a relation $R\subseteq X\times X$, presented by the two projections $R\rightrightarrows X$.

Two consequences of the universal properties are used throughout. A map into a limit is the same as a compatible family of maps into its parts, and a map out of a colimit is the same as a compatible family of maps out of its parts. In particular, a rule on a coproduct of cases is determined by its restrictions to the cases, and a map out of a quotient is a map out of $X$ that is constant on $R$. Moreover, a limit or colimit is unique up to a unique isomorphism compatible with the cone or cocone: the quotient of color lists by permutation and the type of sorted color lists, both shown to be coequalizers of the same pair, are identified by the isomorphism that their universal properties provide.

\paragraph{Representations.}
\begin{definition}[Presentation]\label{def:A-presentation}
A presentation of an object type $T$ is a triple $(D_T,C_T,u_T)$: a diagram $D_T$ whose values are previously introduced types, a cone (or cocone) over $D_T$ with vertex $C_T$, and a witness $u_T$ of its universal property. The vertex $C_T$ is the type used for computation, so the universal property is established for that type itself rather than for a separate copy.
\end{definition}

Because a presentation refers only to previously introduced types, the presentations form a hierarchy. Its leaves are base types such as lists of colors, which are limits only trivially, of the one-object diagram on themselves. Every other object type is a limit or colimit of simpler types: a record without constraints is a product, and a type with constraints is an equalizer, a pullback or a quotient. Rules are represented in the same way. A rule defined by cases is a map out of a coproduct of cases, given by its restriction to each case, and the states or transitions that a condition such as the goal accepts form an equalizer. A rule given by a single formula is an ordinary definition, and a rule that the model does not represent stays in Python and is checked only on the corpus.

\begin{definition}[Case table]\label{def:A-cases}
A case table for a rule group with scope $C$ is a finite family of case predicates $(p_i)_{i\in I}$ on $C$. It satisfies the coproduct law if every element of $C$ satisfies exactly one $p_i$; then $C\cong\coprod_{i\in I}C_i$ with $C_i=\{x\in C:p_i(x)\}$. On the corpus we also require every case to occur. A rule defined case by case is the copairing $[r_i]_{i\in I}\colon\coprod_i C_i\to V$, the unique map whose restriction to each $C_i$ is $r_i\colon C_i\to V$.
\end{definition}

\paragraph{Validity under tasks.}
A representation is valid if replacing a concept by it changes the result of no task. The tasks of AWM are the checks that the model must pass on recorded data: the verdict of every rule on every recorded transition, the rejections on the diagnostic set $\mathcal{N}$, and the task contracts, which require that one step commute with the representation map $e$ from old to new states and that goal verdicts, checker verdicts and action costs be unchanged. A transformation is valid under these tasks if all verdicts and contracts are preserved and the rejections satisfy Eq.~\eqref{eq:unify}, so that $\mathcal{N}$ may only gain rejections.

\paragraph{Definitions and examples.}
A projective limit describes a concept by definition, through the components that support it, and an inductive limit describes it by examples, through what the examples share. The four transformations move along these two kinds of limits.
\begin{definition}[Category-guided transformation]\label{def:A-transform}
A transformation replaces components $S$ of the model by components $T$, together with a representation map $e$ from old to new states. It is one of the following.
\begin{itemize}
\item \emph{Decomposition}: a component $X$ is replaced by a diagram $D$ of simpler components together with a limit cone $(X,\pi)$ over $D$.
\item \emph{Composition}: a component $Y$ is defined from a product $Z$ of existing components and constraints given by subobjects $\varphi_1,\dots,\varphi_k\subseteq Z$, as their intersection, i.e.\ the limit of the inclusions $\varphi_i\hookrightarrow Z$. The composition is proper if no single $\varphi_i$ equals $Y$.
\item \emph{Generalization}: rules $r_i\colon C_i\to V$, $i\in I$, are replaced by the copairing $[r_i]\colon\coprod_i C_i\to V$, which a parameter $\theta\in I$ indexes as $r(\theta)$.
\item \emph{Specialization}, the converse of generalization: a component $X$ is replaced by components $X_i$ with $X\cong\coprod_i X_i$, or by a family $X(\theta)$ with a new parameter and the projection that forgets $\theta$.
\end{itemize}
It is accepted if (i) it is valid under the tasks above and the check of its kind passes, and (ii) the Lean statement of its universal construction compiles when the transformation is made. Deleting a component that is a function of others is recorded separately and is not one of the four transformations.
\end{definition}

The checks of each kind are: for decomposition, that $X\to\lim D\to X$ is the identity on recorded states; for composition, that $Y$ agrees with $\bigwedge_i\varphi_i$ and that each $\varphi_i$ admits a transition that $Y$ rejects; for generalization, the coproduct law and agreement with each $r_i$ on its case; for specialization, that merging the cases back reproduces the original verdicts and that fixing the new parameter to a constant produces errors.

\paragraph{Verification at each node.}
Every non-leaf node of the hierarchy is verified by the Lean kernel: every object type with constraints and every case table must come with a proof of its universal property. The kernel establishes that each node is the limit or colimit of its diagram; whether the representation describes the environment is established by the tasks. The two kinds of evidence therefore answer different questions, what the model entails and whether it agrees with the environment, and neither substitutes for the other. A claim for all well-formed states requires a Lean theorem that does not hold by definitional unfolding alone; for example, the equation stating that a rule defined by pattern matching equals the copairing of its branches holds by unfolding and certifies the form of the rule, not its content.

\paragraph{Action sequences.} A deterministic step $\delta\colon\mathcal{W}\times\mathcal{A}\to\mathcal{W}$ makes the free monoid $\mathcal{A}^*$ act on $\mathcal{W}$ on the right, by $w\cdot\varepsilon=w$ and $w\cdot(\bar a\,a)=\delta(w\cdot\bar a,a)$. With partial legality, each $\delta(-,a)$ is a morphism $\mathcal{W}\to\mathcal{W}$ in the Kleisli category of the option monad, and words act by partial maps \citep{moggi1991notions}. Facts about action sequences are not limits or colimits; one belongs to the model only if it is checked on every level of the corpus. In \code{lp85}, a word of clicks acts by the composite of the clicked buttons' permutations; Lean proves this for every board.

\paragraph{Lean interface.} Diagrams are given over finite quivers, since cones over a quiver coincide with cones over the free category it generates. The names \code{Cone}, \code{Cocone}, \code{IsLimit}, \code{IsColimit} follow Mathlib, with an implementation in Lean core. A final check compiles every module, requires the proofs listed under verification, keeps the computational layer independent of these proofs, forbids \code{sorry}, \code{admit} and \code{native\_decide}, and audits the axioms of every constant. For \code{lp85}, the declarations behind the examples of \S\ref{sec:rules2axioms} are in the ancillary files of this paper (\code{anc/lp85-lean}): \code{worldIsLimit} (the world as a pullback), \code{goalIsLimit} (the goal as an equalizer), and \code{onHit\_is\_desc} and \code{toPerm\_unique} (rules defined by cases as copairings).

\apppar{Lean encoding, certificates and transfer.}{app:lean}
Formalization starts from the consolidated objects and rules. AWM encodes object types as Lean \code{structure}, \code{inductive} or quotient declarations and primitive rules as executable \code{def} definitions. The presentations, case tables and transformation statements written during Structure \& Simplify to check each step (\S\ref{app:limits}) form an auxiliary layer that the model does not import. Consequences derivable from these definitions are proved as \code{theorem} statements in the derived layer. The agent assembles the modeled transition and goal rules in \code{Step.lean}, whose \code{step} function dispatches to the named rule definitions and returns a successor or a terminal outcome for actions admitted by the model's legality conditions. For example, in \code{lp85}, a click either leaves the colors unchanged or applies a button's permutation, after which the goal rule determines whether the level is won; color conservation is a theorem derived from these definitions. This shared transition function is used both to replay recorded trajectories and to reason about unobserved action sequences. In \code{lp85}, the two budget-display rules remain in Python; Lean replays the tile dynamics and goal condition, while color conservation follows from the permutation rule (Figure~\ref{fig:lp85full}).

On this basis, $\mathrm{everWon}(w,\bar a)$ holds when some legal prefix of $\bar a$ reaches a winning outcome. The two statements of \S\ref{sec:modelreasoning} are

{\small
\[
  \mathrm{Winnable}(w):\Leftrightarrow \exists\,\bar a.\
  \mathrm{everWon}(w,\bar a)=\mathsf{true},\qquad
  \mathrm{Unsolvable}(w):\Leftrightarrow \forall\,\bar a.\
  \mathrm{everWon}(w,\bar a)=\mathsf{false}.
\]}

For level $\ell$, the initial world $w^\ell_0$ is parsed from that level's first screen in the certified run, and the certified actions are $\bar a^\ell$. The reachability proof establishes $\mathrm{traceOk}(w^\ell_0,\bar a^\ell)$, i.e.\ that every computed state agrees with the recorded frame on the components the formal model represents, and $\mathrm{Winnable}(w^\ell_0)$ with $\bar a^\ell$ as witness. An impossibility proof establishes $\mathrm{Unsolvable}$ within a budget through a potential satisfying Eq.~\eqref{eq:potential}. Each theorem is closed by kernel reduction, its axioms lie within $\{\texttt{propext},\texttt{Classical.choice},\texttt{Quot.sound}\}$, and no proof uses \texttt{native\_decide}; compiled with Lean 4.34.0, the published \code{Game.lean} of every game compiles and its \code{consistent} and \code{faithful} theorems depend only on \texttt{propext}.

Definition~\ref{def:coverage} asks a sufficient cover to reproduce only the environment's optimal winning sequences under a fixed map $h$, rather than every transition as in an over-approximation \citep{aineto2024guarantees,clarke1994abstraction}. Let $\widehat G$ be the model's goal set.

\begin{theorem}[Transfer under sufficient cover]\label{thm:coverage-transfer}
Suppose $\widehat{\mathcal{M}}$ is a sufficient cover of $\mathcal E$ via $h$, and its initial state is $h(s_0)$. If $\mathrm{Unsolvable}_{\widehat{\mathcal{M}}}\bigl(h(s_0)\bigr)$, then no legal action sequence from $s_0$ reaches $G$.
\end{theorem}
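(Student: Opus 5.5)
The proof is by contradiction, and it goes through optimal winning sequences. Suppose some legal action sequence $\bar a$ from $s_0$ reaches $G$ in $\mathcal E$. Our aim is an optimal winning sequence, because Definition~\ref{def:coverage} only speaks about those.

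The existence of an optimum needs a justification.
\begin{itemize}
\item In the unbudgeted reading, where optimal means shortest, or minimal total cost with nonnegative step costs, the set of winning sequences is nonempty. Their lengths or costs are natural numbers, so by well-ordering there is a winning sequence $\bar a^\ast$ of minimal length or cost.
\item If costs can be negative (e.g.\ fuel refunds in \code{sc25}), we restrict to budget-feasible sequences. For a fixed budget and finitely many actions per step, the costs reachable within the budget are still bounded below, so a minimum over integers is attained.
\end{itemize}
We should state explicitly which notion of optimality is used, since this is where the argument leans on the environment being discrete.

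Next we apply sufficient cover to $\bar a^\ast=(a_1,\dots,a_n)$ with environment states $s_0,s_1,\dots,s_n$, where $s_n\in G$. Step-by-step reproduction via $h$ means that each $a_i$ is legal in the model at $h(s_{i-1})$ and $\delta(h(s_{i-1}),a_i)=h(s_i)$. By induction on $i$, starting from the hypothesis that the model's initial state is $h(s_0)$, the model run of the prefix $a_1\cdots a_i$ is legal and ends in $h(s_i)$.

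It remains to see that $h(s_n)\in\widehat G$, i.e.\ that the model registers the win. The main obstacle is this goal preservation. Definition~\ref{def:coverage} says the transitions are reproduced, and the winning signal is part of the last transition (the step returns a terminal winning outcome). So I would read "covered" as including the winning outcome of the final step, which is how the Lean \code{step} function reports wins. With that reading, $\mathrm{everWon}(h(s_0),\bar a^\ast)=\mathsf{true}$, contradicting $\mathrm{Unsolvable}_{\widehat{\mathcal M}}(h(s_0))$.

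Hence no optimal winning sequence exists. By the existence argument above, there is then no winning sequence at all: every legal action sequence from $s_0$ avoids $G$. If the definition is read without goal preservation, I would instead add $h^{-1}(\widehat G)\supseteq G$ as an explicit hypothesis and note that the proof is otherwise unchanged.
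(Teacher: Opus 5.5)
Your proof is correct and follows the paper's argument. You assume a win, pass to an optimal winning sequence, use sufficient cover to replay it from $h(s_0)$ through the images $h(s_i)$ into $\widehat G$, and contradict $\mathrm{Unsolvable}_{\widehat{\mathcal M}}(h(s_0))$; the paper likewise builds goal preservation into Definition~\ref{def:coverage} and simply asserts that an optimum exists.

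One small caution on your second bullet: a fixed budget with finitely many actions per step does not by itself bound costs below, since a repeatable refund would allow unbounded descent, leave no optimal sequence, and make the cover condition vacuous. Existence of an optimum should instead rest on refunds being one-use, or on measuring optimality by the required cap (the peak counter value), which is a natural number.
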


\begin{proof}
Suppose some legal sequence from $s_0$ reaches $G$. Then the task can be won, so it has an optimal winning sequence $\bar a$. Since $\widehat{\mathcal{M}}$ is a sufficient cover, $\bar a$ is legal in $\widehat{\mathcal{M}}$ from $h(s_0)$, passes through the images of the real states, and ends in $\widehat G$. This is a model win, contradicting $\mathrm{Unsolvable}_{\widehat{\mathcal{M}}}(h(s_0))$.
\end{proof}

Full transition coverage is a special case: a model that reproduces every legal transition, with preserved goals and budget feasibility, reproduces every optimal winning sequence. For budgeted tasks, the optimal winning sequences are those of least cost, and the lemmas below give sufficient conditions that can be checked on engine-derived graphs.

For a fixed budget, let $\mathrm{Tr}_{X}(B)$ denote legal executions from the specified initial state that respect all budget checks. The model budget $\widehat B$ may use explicitly translated cost units.

\begin{lemma}[Budget-local execution transfer]\label{lem:budget-transfer}
If a map $\kappa:\mathrm{Tr}_{\mathcal E}(B)\to\mathrm{Tr}_{\widehat{\mathcal M}}(\widehat B)$ preserves winning outcomes, model impossibility at $\widehat B$ implies environment impossibility at $B$.
\end{lemma}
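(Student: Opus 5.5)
The plan is a contrapositive argument that simply pushes a hypothetical environment win through $\kappa$. First I fix the meaning of the two impossibility statements. \emph{Environment impossibility at $B$} means that no execution in $\mathrm{Tr}_{\mathcal E}(B)$ has a winning outcome. \emph{Model impossibility at $\widehat B$} means that no execution in $\mathrm{Tr}_{\widehat{\mathcal M}}(\widehat B)$ has a winning outcome. This is exactly what the Lean statement of the form \eqref{eq:sealed-certificate} provides: $\mathrm{Won}(\mathrm{run}_{\widehat B}(w_0,\mathbf a))=\mathsf{false}$ for all action lists, read as a statement about the legal, budget-respecting executions generated by $\mathrm{run}$.

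Next, suppose for contradiction that the environment is winnable within budget $B$. Then there is an execution $\tau\in\mathrm{Tr}_{\mathcal E}(B)$ from the specified initial state that ends in (or passes through) a winning outcome. Apply $\kappa$ to obtain $\kappa(\tau)\in\mathrm{Tr}_{\widehat{\mathcal M}}(\widehat B)$. By construction of the codomain, $\kappa(\tau)$ is a legal model execution from the model's initial state that respects every model budget check at $\widehat B$. Because $\kappa$ preserves winning outcomes, $\kappa(\tau)$ has a winning outcome. This is a model win at $\widehat B$, contradicting model impossibility. Hence no element of $\mathrm{Tr}_{\mathcal E}(B)$ wins. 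No step uses optimality or minimality, unlike the proof of Theorem~\ref{thm:coverage-transfer}, so no well-foundedness argument is required.

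The main obstacle is not logical but definitional: the lemma is only as strong as the interpretation of "preserves winning outcomes" and of the translated budget $\widehat B$.

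\begin{itemize}
\item \textbf{Winning outcomes.} I would state the preservation condition as: if some prefix of $\tau$ reaches $G$, then some prefix of $\kappa(\tau)$ reaches $\widehat G$. This matches $\mathrm{everWon}$, so a win occurring mid-trajectory, or a free winning move, is still captured.
\item \textbf{Budget.} I would require that the translation of cost units makes membership in $\mathrm{Tr}_{\widehat{\mathcal M}}(\widehat B)$ part of $\kappa$'s type rather than an extra hypothesis. This is automatic once $\kappa$ is given with that codomain. In applications, it is discharged by checking the potential inequality \eqref{eq:potential} or the budget bookkeeping on the engine-derived graph.
\end{itemize}

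With these conventions fixed, the proof is the three-line contradiction above.
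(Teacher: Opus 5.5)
Your proposal is correct and follows the paper's proof: a budget-feasible environment win maps under $\kappa$ to a budget-feasible model win, contradicting model impossibility at $\widehat B$. Your clarifications are consistent with the paper's setup but add nothing the statement does not already assume: reading ``winning outcome'' in the $\mathrm{everWon}$ sense, placing budget feasibility in $\kappa$'s codomain, and noting that no optimality argument is needed.
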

\begin{proof}
A real win would map to a budget-feasible model win, contradicting model impossibility.
\end{proof}

Unlike full coverage, this condition permits macro actions, stuttering and compressed detours. Goals, one-use resources and intermediate budget guards must still be preserved; final net cost alone is insufficient.

\begin{lemma}[Budget-local invariant transfer]\label{lem:invariant-transfer}
Let $D_B$ contain all real configurations reachable within budget $B$ from the specified initial state, including winning terminals. If a certificate predicate $I_B$ holds initially, is preserved by every budget-admissible real transition in $D_B$, and excludes every real goal in $D_B$, then the environment has no win at budget $B$.
\end{lemma}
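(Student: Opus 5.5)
The argument is a standard inductive-invariant proof by contradiction, carried out along a hypothetical winning execution. Suppose the environment has a win at budget $B$. Then there is a legal execution $s_0 \xrightarrow{a_1} s_1 \xrightarrow{a_2}\cdots\xrightarrow{a_n} s_n$ from the specified initial state. This execution respects every budget check, so it lies in $\mathrm{Tr}_{\mathcal E}(B)$, and it ends in a real goal $s_n\in G$. The plan is to show $I_B(s_k)$ for every $k$, and then use the goal-exclusion clause at $k=n$ to reach a contradiction.

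\textbf{Membership in $D_B$.} First I check that every configuration $s_k$ on the execution lies in $D_B$, and that every step $s_{k-1}\to s_k$ is a budget-admissible real transition within $D_B$. Each prefix of a budget-feasible execution is itself a legal execution that respects all budget checks up to step $k$, because those checks are monotone along the run. Hence $s_k$ is reachable within budget $B$. By hypothesis $D_B$ contains all such configurations, including the winning terminal $s_n$. The step from $s_{k-1}$ to $s_k$ is then budget-admissible, since it is taken inside a budget-respecting execution.

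\textbf{Induction.} Next I argue by induction on $k$. The base case $I_B(s_0)$ is the initiality hypothesis. For the inductive step, suppose $I_B(s_{k-1})$ holds. Since $s_{k-1}\to s_k$ is a budget-admissible transition in $D_B$, preservation gives $I_B(s_k)$. At $k=n$ we obtain $I_B(s_n)$ with $s_n$ a real goal in $D_B$. This contradicts the hypothesis that $I_B$ excludes every real goal in $D_B$, so no win at budget $B$ exists.

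\textbf{Main obstacle.} The delicate point is the bookkeeping behind the membership step, namely what a ``configuration'' must record. Budget admissibility of a transition depends on the resources remaining, not just the board. So the configurations in $D_B$, and the predicate $I_B$, should include the spent or remaining budget and any one-use resources. Otherwise preservation could be verified only on an abstraction that forgets the budget guards, and the inductive step would fail. I would fix this by taking configurations to be (state, remaining budget) pairs. The hypothesis then reads as preservation over exactly the edges of the engine-derived graph restricted to budget-feasible pairs. With this convention the terminal winning move, even if free, is an edge of that graph into a goal configuration, and the induction goes through without a separate case for it.
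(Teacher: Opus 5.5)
Your proposal is correct and follows the paper's argument: induction establishes $I_B$ along every budget-feasible prefix of a hypothetical winning execution, and the winning terminal, which lies in $D_B$ by hypothesis, then contradicts goal exclusion. One wording point: every prefix is budget-feasible because each budget check is applied at its own step, not because the checks are ``monotone'' (with fuel refunds the counter is not monotone), and your convention of including remaining budget in configurations is a sensible way to make ``budget-admissible transition'' precise.
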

\begin{proof}
Induction establishes $I_B$ at every feasible prefix; a winning prefix contradicts goal exclusion.
\end{proof}

For a potential $H$, summing its checked transition inequalities and terminal condition gives $H(q_0)\le\overline C$ for any winning execution, where $\overline C$ is its total in the proof's cost units. A real budget-$B$ win is excluded once cost accounting gives $\overline C\le\widehat B<H(q_0)$. Thus only the certified budget obstruction, not every intermediate modeled state, needs to be preserved.

A partial certificate may be completed, but the required initial bound and all transition and winning-boundary inequalities must then be verified, including any revised existing values. Appendix~\ref{app:transfer} gives the engine checks.

\section{Twenty-five games}
\label{app:games}
This appendix gives the per-game data behind \S\ref{sec:q1}, Table~\ref{tab:scores} and Table~\ref{tab:lean}, the test of frozen rules on unseen transitions, and the \code{lp85} proof sizes of \S\ref{sec:case}.

\apppar{Per-game results.}{app:scores}
Table~\ref{tab:formal} gives AWM's results and model for each game; Table~\ref{tab:scoresfull} gives the official scores of all systems.

\begin{table}[!htbp]
\centering\scriptsize
\setlength{\tabcolsep}{2.4pt}
\caption{AWM on each game. \emph{Score}: official score of the certified run. \emph{Cert.}: actions in the certified clears, which the score counts. \emph{Sent}: every action AWM sent to the engine, including uncertified sessions, failed attempts and resets. \emph{Human}: actions of the human baseline. \emph{Concepts}: concepts written during play, with those withdrawn in brackets. \emph{Rules (Lean)}: rules translated into Lean. \emph{Replayed}: levels whose certified winning run Lean replays from the first screen to the modeled goal.}
\label{tab:formal}
\begin{tabular}{@{}lrrrrrrrrrr@{}}
\toprule
Game & Lv. & Score & Cert. & Sent & Human & Concepts & Obj.\ types & Rules & Rules (Lean) & Replayed \\
\midrule
\code{ar25} & 8 & 100.0 & 253 & 253 & 748 & 17 (3) & 6 & 8 & 5 & 8/8 \\
\code{bp35} & 9 & 100.0 & 374 & 374 & 651 & 26 (4) & 3 & 11 & 10 & 9/9 \\
\code{cd82} & 6 & 100.0 & 70 & 661 & 171 & 44 (13) & 7 & 11 & 10 & 6/6 \\
\code{cn04} & 6 & 100.0 & 261 & 745 & 789 & 22 (1) & 4 & 9 & 8 & 6/6 \\
\code{dc22} & 6 & 100.0 & 1{,}037 & 2{,}664 & 1{,}228 & 48 (6) & 8 & 17 & 14 & 6/6 \\
\code{ft09} & 6 & 100.0 & 80 & 168 & 208 & 15 (6) & 6 & 7 & 4 & 6/6 \\
\code{g50t} & 7 & 100.0 & 288 & 811 & 879 & 20 (1) & 10 & 13 & 11 & 7/7 \\
\code{ka59} & 7 & 97.6 & 421 & 4{,}061 & 730 & 22 (8) & 9 & 11 & 10 & 7/7 \\
\code{lf52} & 10 & 100.0 & 905 & 905 & 1{,}339 & 47 (6) & 12 & 11 & 10 & 10/10 \\
\code{lp85} & 8 & 100.0 & 79 & 200 & 388 & 23 (4) & 6 & 6 & 4 & 8/8 \\
\code{ls20} & 7 & 100.0 & 368 & 1{,}467 & 776 & 22 (3) & 12 & 21 & 18 & 7/7 \\
\code{m0r0} & 6 & 100.0 & 215 & 486 & 1{,}107 & 26 (5) & 8 & 9 & 8 & 6/6 \\
\code{r11l} & 6 & 100.0 & 81 & 184 & 233 & 26 (4) & 6 & 9 & 8 & 6/6 \\
\code{re86} & 8 & 100.0 & 547 & 2{,}153 & 1{,}255 & 40 (4) & 7 & 12 & 8 & 8/8 \\
\code{s5i5} & 8 & 99.3 & 461 & 2{,}618 & 638 & 49 (9) & 5 & 19 & 17 & 8/8 \\
\code{sb26} & 8 & 100.0 & 124 & 424 & 213 & 20 (1) & 6 & 16 & 12 & 8/8 \\
\code{sc25} & 6 & 100.0 & 146 & 401 & 350 & 27 (1) & 5 & 18 & 11 & 6/6 \\
\code{sk48} & 8 & 100.0 & 263 & 1{,}407 & 1{,}070 & 39 (8) & 6 & 7 & 4 & 8/8 \\
\code{sp80} & 6 & 100.0 & 138 & 371 & 518 & 58 (23) & 3 & 10 & 6 & 6/6 \\
\code{su15} & 9 & 100.0 & 90 & 904 & 361 & 37 (15) & 5 & 16 & 12 & 9/9 \\
\code{tn36} & 7 & 100.0 & 125 & 292 & 317 & 33 (9) & 5 & 7 & 4 & 7/7 \\
\code{tr87} & 6 & 100.0 & 119 & 430 & 414 & 13 (3) & 6 & 7 & 4 & 6/6 \\
\code{tu93} & 9 & 100.0 & 187 & 566 & 462 & 17 (0) & 8 & 10 & 8 & 9/9 \\
\code{vc33} & 7 & 100.0 & 167 & 538 & 447 & 14 (0) & 9 & 10 & 7 & 7/7 \\
\code{wa30} & 9 & 100.0 & 649 & 1{,}521 & 1{,}843 & 37 (6) & 8 & 8 & 7 & 9/9 \\
\midrule
Total & 183 & 99.9 & 7{,}448 & 24{,}604 & 17{,}135 & 742 (143) & 170 & 283 & 220 & 183/183 \\
\bottomrule
\end{tabular}
\end{table}

\begin{table}[!htbp]
  \centering
  \caption{Official scores on the 25 public ARC-AGI-3 games. Baselines are the
  official ARC~Prize results: Claude Opus~5 (High) and GPT-6 Astra (Max) under the
  standard harness, which provides no code execution, and GPT-6 Astra (Max) under
  its provider-adapter harness. AWM is scored on its certified run.
  Rodionov's agent from its ARC Prize scorecard~\citep{rodionov2026need}; Memento~3 as reported in~\citep{zhao2026memento3}.
  }
  \label{tab:scoresfull}
  \setlength{\tabcolsep}{5.7pt}
  \footnotesize
  \begin{tabular}{@{}lrrrrrrr@{}}
    \toprule
    & & \multicolumn{6}{c}{Official score (max 100)} \\
    \cmidrule(l){3-8}
    Game & Lv. & \shortstack[r]{Claude\\Opus~5} & \shortstack[r]{GPT-6\\Astra} & \shortstack[r]{GPT-6 Astra\\(adapter)} & \shortstack[r]{Rodionov\\\citeyearpar{rodionov2026need}} & \shortstack[r]{Memento~3\\\citep{zhao2026memento3}} & \shortstack[r]{\textbf{AWM}\\\textbf{(ours)}} \\
    \midrule
    \code{ar25} & 8 & 100.0 & 100.0 & 100.0 & 100.0 & 100.0 & \textbf{100.0}\\
    \code{bp35} & 9 & 0.0 & 2.2 & 100.0 & 100.0 & 100.0 & \textbf{100.0}\\
    \code{cd82} & 6 & 0.6 & 82.9 & 100.0 & 100.0 & 100.0 & \textbf{100.0}\\
    \code{cn04} & 6 & 47.6 & 100.0 & 100.0 & 100.0 & 100.0 & \textbf{100.0}\\
    \code{dc22} & 6 & 44.8 & 100.0 & 100.0 & 100.0 & 100.0 & \textbf{100.0}\\
    \code{ft09} & 6 & 100.0 & 100.0 & 100.0 & 100.0 & 100.0 & \textbf{100.0}\\
    \code{g50t} & 7 & 0.0 & 0.0 & 100.0 & 100.0 & 100.0 & \textbf{100.0}\\
    \code{ka59} & 7 & 10.4 & 75.0 & 100.0 & 100.0 & 100.0 & \textbf{97.6}\\
    \code{lf52} & 10 & 1.8 & 10.9 & 100.0 & 100.0 & 100.0 & \textbf{100.0}\\
    \code{lp85} & 8 & 100.0 & 100.0 & 100.0 & 100.0 & 100.0 & \textbf{100.0}\\
    \code{ls20} & 7 & 5.4 & 83.5 & 100.0 & 100.0 & 100.0 & \textbf{100.0}\\
    \code{m0r0} & 6 & 28.6 & 71.4 & 100.0 & 100.0 & 100.0 & \textbf{100.0}\\
    \code{r11l} & 6 & 100.0 & 100.0 & 100.0 & 100.0 & 100.0 & \textbf{100.0}\\
    \code{re86} & 8 & 58.3 & 77.8 & 100.0 & 100.0 & 100.0 & \textbf{100.0}\\
    \code{s5i5} & 8 & 100.0 & 100.0 & 100.0 & 100.0 & 100.0 & \textbf{99.3}\\
    \code{sb26} & 8 & 77.8 & 100.0 & 100.0 & 100.0 & 100.0 & \textbf{100.0}\\
    \code{sc25} & 6 & 0.0 & 47.6 & 100.0 & 84.2 & 100.0 & \textbf{100.0}\\
    \code{sk48} & 8 & 8.3 & 19.7 & 100.0 & 100.0 & 100.0 & \textbf{100.0}\\
    \code{sp80} & 6 & 56.3 & 100.0 & 100.0 & 100.0 & 100.0 & \textbf{100.0}\\
    \code{su15} & 9 & 10.4 & 27.4 & 100.0 & 100.0 & 100.0 & \textbf{100.0}\\
    \code{tn36} & 7 & 6.0 & 53.6 & 100.0 & 90.1 & 100.0 & \textbf{100.0}\\
    \code{tr87} & 6 & 47.6 & 100.0 & 100.0 & 100.0 & 100.0 & \textbf{100.0}\\
    \code{tu93} & 9 & 3.6 & 3.1 & 100.0 & 100.0 & 100.0 & \textbf{100.0}\\
    \code{vc33} & 7 & 98.8 & 73.3 & 100.0 & 100.0 & 100.0 & \textbf{100.0}\\
    \code{wa30} & 9 & 10.6 & 80.0 & 100.0 & 100.0 & 100.0 & \textbf{100.0}\\
    \midrule
    Mean & & 40.7 & 68.3 & 100.0 & 99.0 & 100.0 & \textbf{99.9}\\
    Games at 100 & & 5 & 10 & 25 & 23 & 25 & \textbf{23}\\
    \bottomrule
  \end{tabular}
\end{table}

\apppar{Frozen rules on unseen transitions.}{app:unseen}
We evaluate whether learned rules generalize to unseen
screens and detect changes in game mechanics.
For \code{lp85} and \code{ft09}, we freeze the concepts
recorded during play (19 and 9, respectively) and the
rules after Structure \& Simplify (6 and 7).
We collect transitions from random presses (seeds 0--9),
retaining only states whose screens never appeared in
the recorded gameplay.

Each game is evaluated on new trajectories from the
original levels (T1) and two modified levels, whose
winnability is verified by exhaustive search.
In \code{lp85}, the modifications preserve the mechanics
but change the layout: U moves one goal tile, making
level~6 unwinnable, while D adds another swap that
restores winnability.
In \code{ft09}, the modifications change the hidden
click mechanic: FU replaces level~2's click pattern
with a plus shape, making it unwinnable, while FD adds
a one-pixel change that restores winnability.

A rule set accepts a transition if no rule judges it
false, and rejects it if at least one rule judges it
false. We also generate invalid transitions using
five corruption operators to test whether the rules
can detect inconsistencies.
Table~\ref{tab:heldout} reports the results before
and after Structure \& Simplify. The frozen rules
generalize to modified layouts in \code{lp85},
while detecting all tested transitions as inconsistent
after the mechanic changes in \code{ft09}.

\begin{table}[!htbp]
\centering\footnotesize
\setlength{\tabcolsep}{3pt}
\caption{Frozen-rule evaluation on unseen transitions,
before / after Structure \& Simplify (\%).
\code{lp85} U/D test generalization to modified layouts,
while \code{ft09} FU/FD test detection of changed mechanics.
``Wrong rejected'' reports the range over the four
corruption operators for which both rule sets can read
both frames. The 0\% acceptance on FU/FD indicates
that all tested transitions violate the frozen rules:
the ``nothing else changes'' rule detects clicks
changing 110 or 146 cells instead of 38.}
\label{tab:heldout}
\begin{tabularx}{\linewidth}{@{}Xrrrr@{}}
\toprule
Set & $n$ & Transitions accepted & Wrong rejected & Next frame predicted \\
\midrule
lp85 T1 (new trajectories) & 2{,}073 & 100 / 100 & 99.5--100 / 99.5--100 & 99.95 / 99.95 \\
lp85 tile swap U & 2{,}080 & 100 / 100 & 100 / 100 & 100 / 100 \\
lp85 tile swap D & 2{,}080 & 100 / 100 & 100 / 100 & 100 / 100 \\
ft09 T1 (new trajectories) & 2{,}026 & 99.0 / 100 & 99.0--100 / 100 & --- \\
ft09 pattern change FU & 2{,}015 & 0 / 0 & --- & --- \\
ft09 pattern change FD & 2{,}016 & 0 / 0 & --- & --- \\
\bottomrule
\end{tabularx}
\end{table}

\paragraph{\code{ls20}.}
On \code{ls20}, the concepts written during play accept 67--70\% of valid transitions on unseen screens; after Structure \& Simplify, the frozen rules accept 99.47--99.95\% and reject 99.1--99.3\% of corrupted transitions, against 71.8--100\% for the concepts (Table~\ref{tab:ls20}). The gap comes from wall bumps: the concepts \code{energy\_tick} and \code{energy\_cost} charge energy for a blocked move only on the two levels where play recorded one, while the rule \code{energy\_cost} charges every wall bump the level's step cost, which holds on screens the agent never saw.

\begin{table}[!htbp]
\centering
\caption{Frozen concepts and rules on two \code{ls20} test sets of previously unseen screens (\%). T1: new trajectories on the original levels; T2V: level~1 with its switch moved one cell. Corrupted rejected: range over the four corruption operators for which both sides can read both frames.}
\label{tab:ls20}
\small
\begin{tabular}{@{}lrcccc@{}}
\toprule
 & & \multicolumn{2}{c}{Valid accepted} & \multicolumn{2}{c}{Corrupted rejected} \\
\cmidrule(lr){3-4}\cmidrule(lr){5-6}
Test set & $n$ & Concepts & Rules & Concepts & Rules \\
\midrule
T1 & 2{,}017 & 67.0 & 99.95 & 75.3--100 & 99.1--99.3 \\
T2V & 2{,}074 & 70.3 & 99.47 & 71.8--100 & 99.1--99.2 \\
\bottomrule
\end{tabular}
\end{table}

\begin{figure}[!htbp]
\centering
\includegraphics[width=\linewidth]{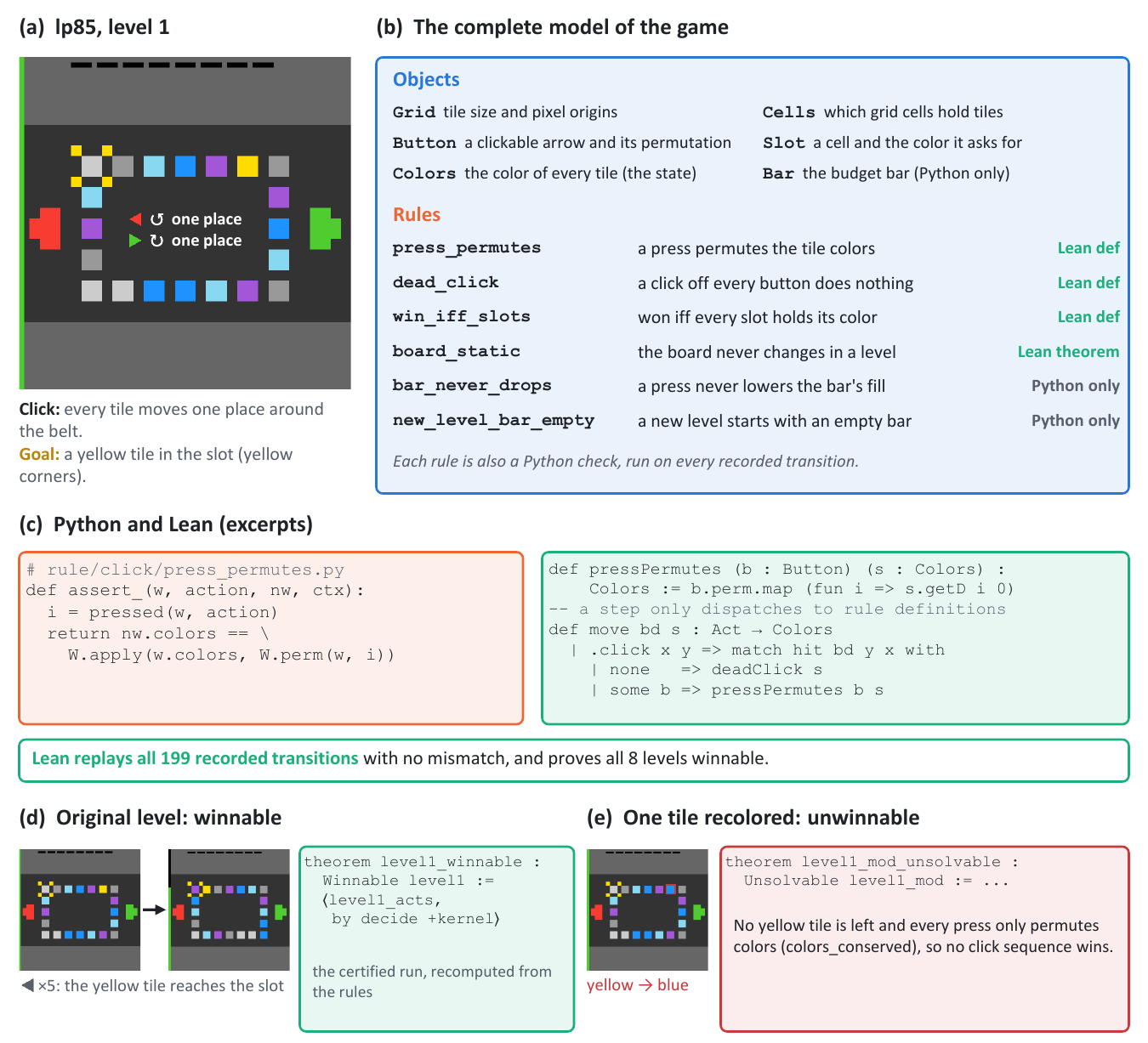}
\caption{The complete algebraic world model of an ARC-AGI-3 game. (a)~Level~1 of \code{lp85}: 20 colored tiles form a closed belt, and each click on an arrow moves every tile one place around the belt, counterclockwise for the left arrow and clockwise for the right; the level is won when a yellow tile stands in the slot marked by yellow corners. (b)~All objects and rules of the model, with the form each takes in Lean; the two rules about the budget bar stay in Python. (c)~Excerpts: one rule as its Python check, the same rule as a Lean definition, and the Lean step, which only dispatches to rule definitions. Lean replays every recorded transition of the game with no mismatch. (d)~On the original level, Lean checks a winning witness, the certified run. (e)~With the only yellow tile recolored, the conservation law of \code{press\_permutes} shows that no click sequence wins. Both statements are theorems about the model.}
\label{fig:lp85full}
\end{figure}

\apppar{\code{lp85}: the complete model and proof size.}{app:e3}
Figure~\ref{fig:lp85full} shows the complete model of
\code{lp85}. To evaluate the effect of Structure \&
Simplify on proof construction, we compare model A,
the eleven concepts recorded during play, with model B,
the single merged \code{press\_permutes} rule.
We ask Claude Opus~5.5 to establish the solvability
(with a winning click sequence) or unsolvability
(with a Lean proof) of 34 tasks under each model.
Each task is run three times per model, yielding
204 runs in total, and every verdict is checked
against exhaustive search on the engine.

Table~\ref{tab:lp85proof} reports proof success and
complexity. Model B succeeds in all 102 runs, compared
with 99 for A, and generally requires fewer auxiliary
lemmas, shorter proofs, and less Lean checking time.

\begin{table}[!htbp]
\centering\footnotesize
\setlength{\tabcolsep}{3pt}
\caption{Proof success and complexity for 34 \code{lp85}
tasks using the original concepts (A) or the merged
rule (B). Verified counts successful runs;
auxiliary lemmas, proof lines, and kernel checking
time are medians over successful runs.
The macro-average is the unweighted mean of the
six group medians. All values are reported as A / B.}
\label{tab:lp85proof}
\begin{tabularx}{\linewidth}{@{}Xrcccc@{}}
\toprule
Instances & $n$ &
\shortstack{Verified\\A / B} &
\shortstack{Aux. lemmas\\A / B} &
\shortstack{Proof lines\\A / B} &
\shortstack{Kernel time (s)\\A / B} \\
\midrule
original levels (winnable)
& 24 & 24 / 24 & 0 / 0 & 4.5 / 4 & 1.00 / 0.83 \\
goal colour removed (unwinnable)
& 24 & 24 / 24 & 6 / 3 & 75.5 / 28 & 0.90 / 0.80 \\
level~6 swaps (U and D)
& 6 & 6 / 6 & 2 / 1.5 & 36.5 / 24 & 1.69 / 1.22 \\
large layouts (winnable)
& 18 & 16 / 18 & 0 / 0 & 5.5 / 4 & 1.95 / 1.00 \\
large layouts, altered (unwinnable)
& 18 & 17 / 18 & 12 / 4 & 117 / 49.5 & 2.82 / 2.41 \\
ring spacing (unwinnable)
& 12 & 12 / 12 & 12 / 6.5 & 111 / 62.5 & 4.27 / 3.02 \\
\midrule
Macro avg. of medians
& -- & -- & 5.33 / 2.50 & 58.33 / 28.67 & 2.11 / 1.55 \\
\textbf{All runs}
& \textbf{102} & \textbf{99 / 102} & -- & -- & -- \\
\bottomrule
\end{tabularx}
\end{table}

No run produces an incorrect verdict: A has two
timeouts and one run ending without an answer.
On winnable instances, both models establish
reachability using a winning click sequence and
\code{decide}.

\apppar{Checking a model-level proof against the engine.}{app:engine-proof}
This check concerns the level-6 tile-swap variant U of Appendix~\ref{app:unseen}. We export its engine-derived, budget-independent reachable configuration graph as a Lean table, with 7,500 configurations and 52,500 transitions for the seven nontrivial button actions. A Lean proof checks each transition against the premises of the model-level unsolvability theorem, so the conclusion transfers to the exported graph. The proof uses only Lean's standard axioms. Extracting the graph from the game engine remains outside the Lean kernel's guarantee.

\section{Sealed levels}
\label{app:sc25}
\begin{figure}[t]
\centering
\includegraphics[width=\linewidth]{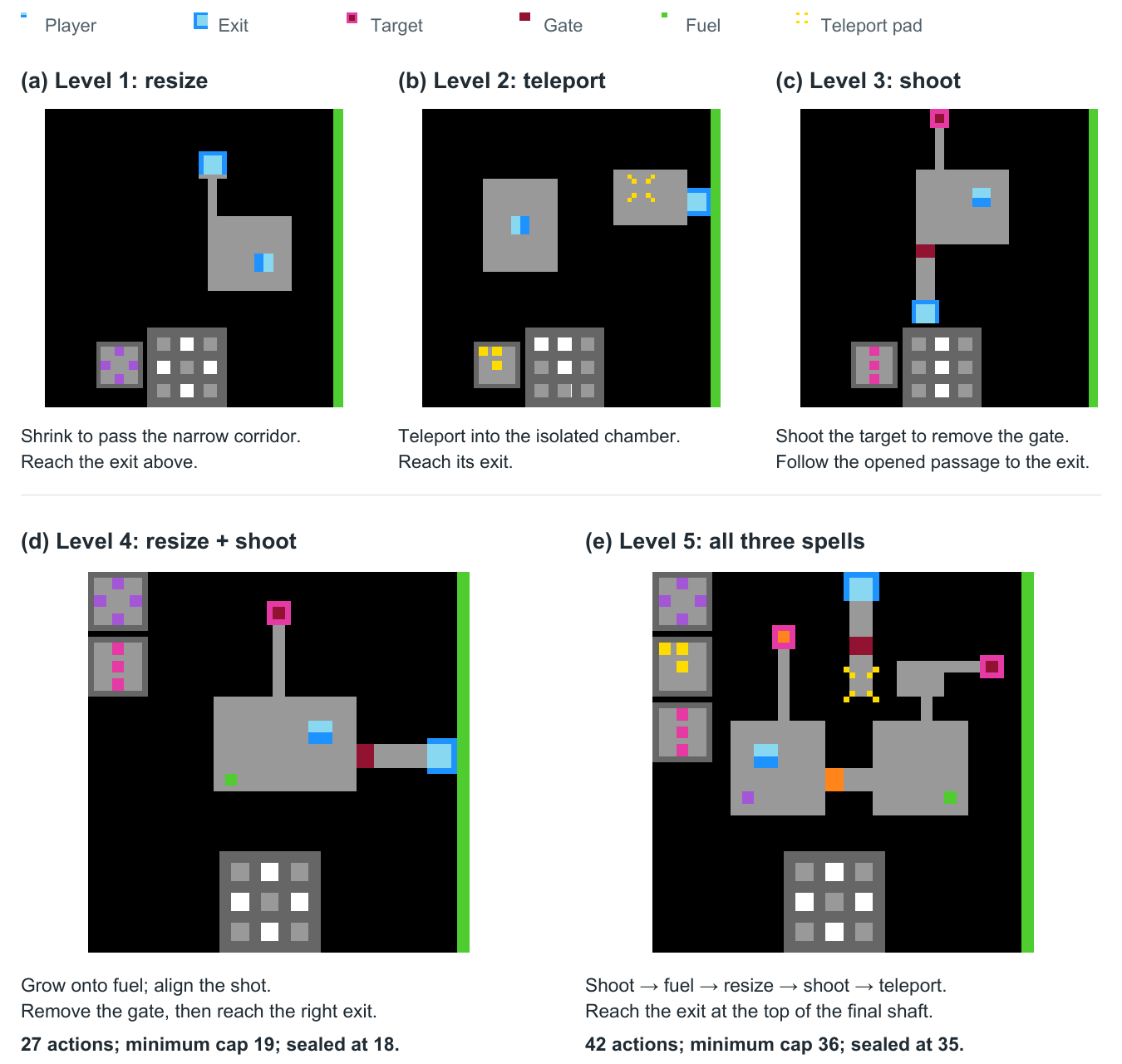}
\caption{The three introductory levels and two sealed evaluation levels of \code{sc25}, rendered from the engine in the official palette. Panels (a--c) isolate resizing, teleportation and shooting; panels (d--e) combine them. The annotations show each objective and the main stages of a minimum-budget winning route. The sealed caps, 18 and 35, are one unit below the required 19 and 36.}
\label{fig:sc25levels}
\end{figure}

The sealed tasks give a known winning-cost threshold against which to assess the reasoning in \S\ref{sec:sc25}. A proof can be valid under a learned model even when that model misstates the game. We therefore examine the initial bounds and their underlying rules, then follow the corrections through renewed proofs and checks against the engine. The final studies test how proof dependencies help locate a faulty rule and what the verification procedure itself can detect.

\needspace{12\baselineskip}
\apppar{Construction and engine ground truth.}{app:truth}
Each task is obtained by lowering a solvable level's budget to one unit below its minimum feasible cap $B^\star$. This gives both a winning witness at $B^\star$ and a known impossible instance at $B^\star-1$ (Table~\ref{tab:seals}).

\begin{table}[H]
\centering\footnotesize
\setlength{\tabcolsep}{4pt}
\caption{Reference thresholds for the sealed tasks. $B^\star$ is the minimum feasible cap; ``gross'' adds back the fuel refund of 10. Each witness wins at $B^\star$ and fails at $B^\star-1$, while exhaustive search finds no winning route at the sealed cap from any opening facing. Route lengths count actions, while costs use the engine's counter units.}
\label{tab:seals}
\begin{tabular}{@{}lrrrr@{}}
\toprule
Level & Public cap & Sealed cap & $B^\star$ (gross) & Optimal route \\
\midrule
4 & 35 & 18 & 19 (29) & 27 actions \\
5 & 65 & 35 & 36 (46) & 42 actions \\
\bottomrule
\end{tabular}
\end{table}

\code{sc25} is a deterministic, top-down puzzle game in which a block must reach an exit through walls and gates. Direction commands set its facing and move it four pixels when large ($4\times4$) or two when small ($2\times2$); a blocked large move also tries a two-pixel step. Matching a pattern on the $3\times3$ panel casts a spell: resize, teleport to a size-dependent destination, or fire a straight ray at a target to remove its gates. Walls and gates stop the ray. Direction commands and panel clicks cost one unit, spell completion costs one more, and fuel refunds ten. The winning move is free; exceeding the cap ends the attempt. The right-edge bar displays the remaining budget.

We rearrange the public levels' wall masks and object placements while preserving the transition code, existing sprite definitions, spell patterns and cost rules. Before sealing, available spells and budget caps match the public levels. Figure~\ref{fig:sc25levels} shows the resulting tasks. Levels~1--3 introduce shrinking through a narrow corridor, teleporting into an isolated exit chamber, and shooting to open a gated passage. Their caps are 50, 25 and 50; replayed witnesses use 15, 6 and 14 actions.

Level~4 combines a target at the end of a two-pixel-wide shaft, a gate across the right exit passage, and fuel. The large block's initial movement grid misses the shaft. A minimum-budget route shrinks and regrows over the fuel, obtaining the refund and changing its alignment, then lines up a shot and exits through the opened gate. Its 27 actions require a cap of 19 (gross cost 29). Level~5 has two target--gate groups, fuel, a narrow connector and separate teleport destinations for each size. A minimum-budget route shoots the first target, collects fuel, shrinks through the connector, regrows to shoot the second target, and teleports as a large block into the exit shaft. Its 42 actions require 36 (gross cost 46). These routes minimize the required budget, rather than the unconstrained number of actions.

The witnesses establish that caps 19 and 36 suffice. To establish that lower caps cannot win, we search the sealed engine before any agent run, covering all four opening facings and all 13 actions (four moves and nine panel cells). The reachable configurations number 905{,}587 on level~4 and 1{,}423{,}609 on level~5. For each configuration, the search retains the least counter value, since a lower counter leaves at least as much budget for every continuation. We audit all six counter updates and both cap checks in the source. To check for effective clicks omitted by the action set, we also scan the whole $64\times64$ screen on 30{,}001 sampled frame classes per level; clicks outside the nine panel cells change none of the 13 actions' outcomes. Editing the cap field changes only the budget, so the winning and sealed instances share the same dynamics.

\apppar{Agents and protocol.}{app:configs}
The comparison tests whether agents can identify the budget obstruction and justify it from interaction. Three configurations play each sealed level, three runs each. AWM uses Claude Opus~5.5 with the AWM procedure, a generic Lean skeleton and tools for recording predictions and checking rules; it receives no game rules. Claude Opus~5.5 and GPT-6 Astra without AWM receive the task, game interface and ordinary coding-agent tools.

Each run has up to eight reset-delimited attempts, ending at game over or recorded abandonment. After each attempt, the interview in Appendix~\ref{app:prompts} records the agent's verdict and argument. The engine reference thresholds are withheld: arenas have opaque names, contain no ground-truth file, and begin with cross-run memory cleared. This keeps the agent's reported bound separate from the reference search used to evaluate it.

\apppar{Proof validity and bound accuracy.}{app:mainruns}
The per-run results explain why a model proof and an accurate account of the game are scored separately. All six AWM runs produce closed Lean impossibility proofs, but two initially report overly high thresholds: 20 on level~4 and 47 on level~5. Their proofs certify the learned rules; the numerical errors reveal that those rules exclude cheaper behavior allowed by the engine. Table~\ref{tab:runs} also records the baseline verdicts, including an impossibility claim later withdrawn. These individual outcomes underlie Table~\ref{tab:verdicts} and motivate the repair round below.

\emph{First claim} records the attempt of the earliest impossibility verdict, even if later withdrawn; it is not proof-completion time. \emph{Bound} is the least winning cost reported before feedback, in the run's own units for runs without AWM. The reference $B^\star$ is 19 on level~4 and 36 on level~5. A bound counts as correct if it equals $B^\star$ or differs only by charging the otherwise free winning move: 37 on level~5, or 42 actions against a budget of 41 in \code{fbac96}. This accounting allowance applies to every agent.

\begin{table}[!htbp]
\centering\footnotesize
\setlength{\tabcolsep}{3pt}
\caption{Per-run verdicts, reported bounds and model proofs for the 18 runs summarized in Table~\ref{tab:verdicts}. ``Correct'' evaluates the reported bound against the engine; ``Lean proof'' records a closed, kernel\mbox{-}checked impossibility theorem under the run's learned model. The text specifies the timing and cost-accounting conventions.}
\label{tab:runs}
\begin{tabular}{@{}llllllcc@{}}
\toprule
Level & Agent & Run & First claim & Final verdict & Bound & Correct & Lean proof \\
\midrule
4 & AWM & \code{9ec8d4} & 2 & impossible & 20 & no & yes \\
4 & AWM & \code{69fe87} & 1 & impossible & 19 & yes & yes \\
4 & AWM & \code{2a3cc5} & 2 & impossible & 19 & yes & yes \\
4 & Claude Opus~5.5 & \code{2f1a1c} & --- & undecided & --- & no & no \\
4 & Claude Opus~5.5 & \code{d267df} & --- & undecided & --- & no & no \\
4 & Claude Opus~5.5 & \code{1dd6e6} & --- & undecided & --- & no & no \\
4 & GPT-6 Astra & \code{44ec53} & 8 & undecided & 22 & no & no \\
4 & GPT-6 Astra & \code{4d3b31} & 8 & impossible & 22 & no & no \\
4 & GPT-6 Astra & \code{98893b} & --- & undecided & --- & no & no \\
\midrule
5 & AWM & \code{284967} & 3 & impossible & 37 & yes & yes \\
5 & AWM & \code{4f4628} & 2 & impossible & 36 & yes & yes \\
5 & AWM & \code{977714} & 2 & impossible & 47 & no & yes \\
5 & Claude Opus~5.5 & \code{339b0a} & 4 & impossible & 47--49 & no & no \\
5 & Claude Opus~5.5 & \code{202da3} & 6 & impossible & 43 & no & no \\
5 & Claude Opus~5.5 & \code{fbac96} & 6 & impossible & 42 (budget 41) & yes & no \\
5 & GPT-6 Astra & \code{2cdca5} & --- & undecided & --- & no & no \\
5 & GPT-6 Astra & \code{3eb226} & --- & undecided & --- & no & no \\
5 & GPT-6 Astra & \code{b3830e} & --- & undecided & --- & no & no \\
\bottomrule
\end{tabular}
\end{table}

\apppar{Repairing the rules and proving again.}{app:repair}
The two overestimated AWM bounds provide a test of whether an explicit model can be corrected after a valid proof has been obtained. After the eight-attempt round, those runs and the four baseline runs with wrong bounds receive one piece of feedback: the engine permits a cheaper win than their analysis allows. The message reveals no rule, cost or route. AWM additionally receives a self-audit instruction to relax rules consistently with the record, test disagreements in the game, revise and prove again; the baseline message states the same discrepancy logic in plain language. Each run gets at most four further attempts.

The corrections change specific premises of the proofs (Table~\ref{tab:feedback}). On level~4, growing onto fuel collects it, reducing AWM's bound from 20 to 19; Figure~\ref{fig:l4level} shows the missed interaction and the cheaper route. On level~5, terrain rather than limited range blocks a shot, reducing the bound from 47 to 37 under the run's convention of charging the winning move, equivalent to the engine's 36. Both repairs produce new closed Lean proofs. Astra corrects both numerical bounds without producing a formal proof, while Claude Opus~5.5 corrects neither. Thus numerical revision and renewed formal certification remain distinct outcomes of the same feedback.

\begin{table}[!htbp]
\centering\footnotesize
\setlength{\tabcolsep}{3pt}
\caption{Outcomes after feedback that a cheaper win exists. The revised-rule column identifies the premise changed by AWM before proving again. Bounds of runs without AWM remain in their own units, with the budget they state. Attempts count additional attempts in the repair round.}
\label{tab:feedback}
\begin{tabularx}{\linewidth}{@{}>{\raggedright\arraybackslash}p{31mm}cc>{\raggedright\arraybackslash}X>{\raggedright\arraybackslash}p{24mm}c@{}}
\toprule
Run & Level & Attempts & Rule revised & Bound after & Lean proof \\
\midrule
AWM \code{9ec8d4} & 4 & 4 & growing onto fuel collects it & 19 & yes \\
AWM \code{977714} & 5 & 4 & terrain, not range, stops the shot & 37 & yes \\
GPT-6 Astra \code{44ec53}, \code{4d3b31} & 4 & $\le$4 & --- & 19 (both) & no \\
Claude Opus~5.5 \code{339b0a}, \code{202da3} & 5 & $\le$4 & --- & 42 (budget $\sim$40), 68 (budget 64) & no \\
\bottomrule
\end{tabularx}
\end{table}

\begin{figure}[H]
\centering
\includegraphics[width=\linewidth]{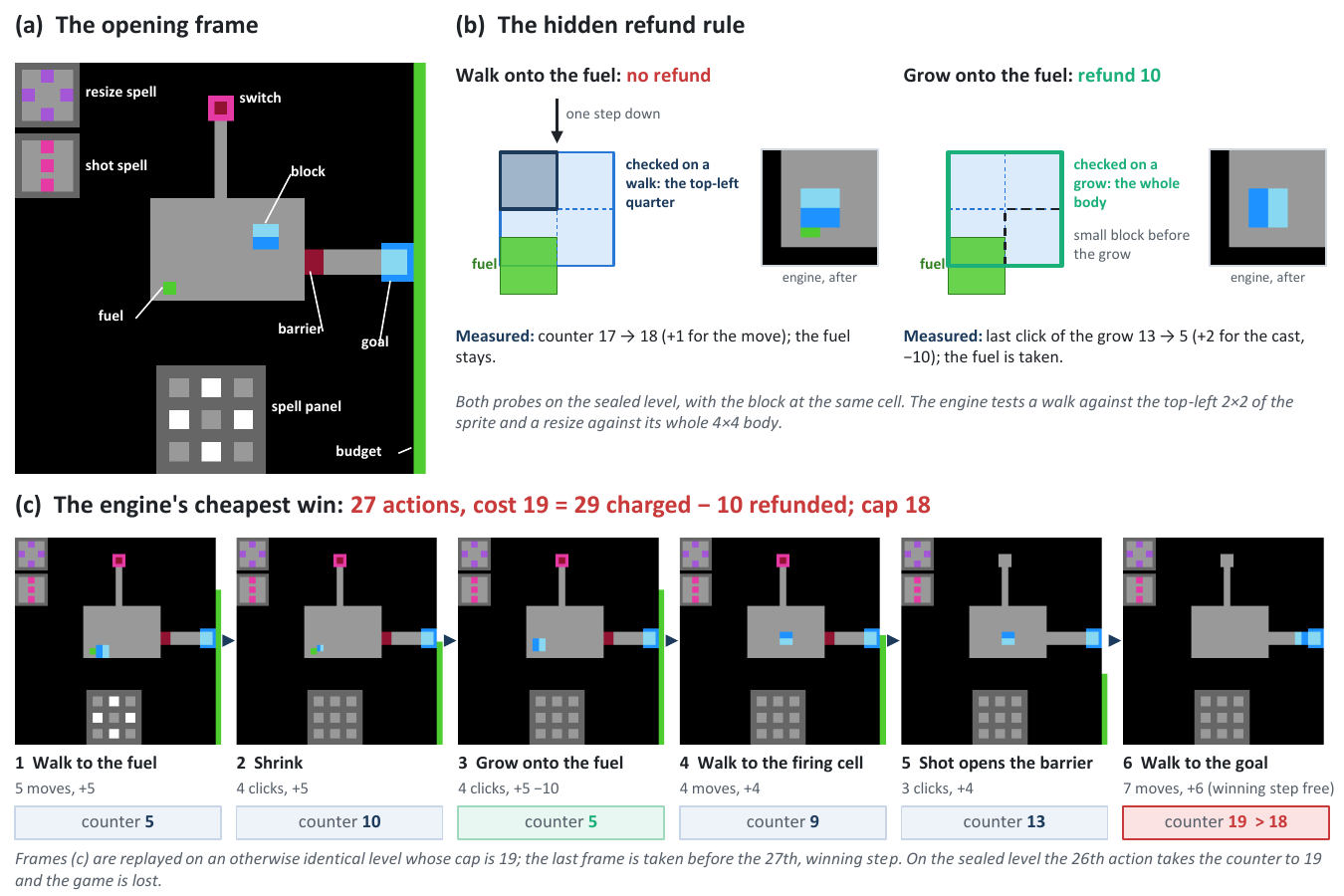}
\caption{Why the level-4 model needed revision. (a)~The opening. (b)~Walking checks only the large block's top-left quarter for fuel, whereas growing checks its full footprint and can collect fuel under the bottom-left quarter, refunding~10. Missing this distinction led one AWM run to report~20. (c)~The cheaper witness uses 27 actions and requires~19. It is replayed at cap~19; at the sealed cap~18, action~26 ends the game. All frames come from the engine.}
\label{fig:l4level}
\end{figure}

\apppar{Transferring budget certificates to the engine.}{app:transfer}
We check each model proof's invariant or potential certificate on engine-generated transitions, using completed \code{2a3cc5} V4, feedback-repaired \code{9ec8d4} and \code{977714}, and the other runs' final models. Five proofs extend to the games (Table~\ref{tab:budget-transfer}).

\begin{table}[!htbp]
\centering\footnotesize
\setlength{\tabcolsep}{4pt}
\caption{Budget-local certificate audit at the fixed opening and sealed caps, 18 on L4 and 35 on L5. Pass includes the listed post-hoc treatment; it does not assert full coverage.}
\label{tab:budget-transfer}
\begin{tabularx}{\linewidth}{@{}clXc@{}}
\toprule
Level & Run / snapshot & Transfer argument & Result \\
\midrule
4 & \code{9ec8d4}, repaired & Original $NN/MM$ values; panel-progress lifting & Pass \\
4 & \code{69fe87} & Original $V$ and domain; panel credit & Pass \\
4 & \code{2a3cc5}, V4 & Original arrival-cost closure invariant & Fails \\
5 & \code{284967} & Extend potential; bridge macro prices and fuel & Pass \\
5 & \code{4f4628} & Extend weighted certificate; translate budget & Pass \\
5 & \code{977714}, repaired & Extend certificate using real growth and costs & Pass \\
\bottomrule
\end{tabularx}
\end{table}

\paragraph{Procedure.}
From the fixed opening, the unchanged engine generates every transition within the sealed cap, 18 on L4 and 35 on L5, and each successor is taken from the engine rather than from the learned step function. For each certificate we check the conditions of Lemma~\ref{lem:invariant-transfer}: it holds initially, is preserved by every budget-admissible real transition, and excludes every real win. For a potential, these are the inequalities of Eq.~\eqref{eq:potential} and the winning boundary after translating the certificate's cost units into the game's budget, including intermediate budget checks, refunds and the free winning move. On L4, both passing certificates hold with their original values once prepaid panel clicks are credited in the budget accounting. On L5, the learned models omit some real successors: a blocked move that turns in place, the footprint freed by a destroyed target, and further placements of growth. None of these transitions occurs on a cheapest winning route of the game, so they do not affect the impossibility proofs: the certificates are completed on them while keeping their original initial bounds, and every inequality is then verified again, so the certified lower bound holds in the game.

\paragraph{Failure.}
The invariant of \code{2a3cc5} V4 does not transfer: the model lets the block collect fuel where the game does not and misses a refund, and the engine produces real transitions on which the invariant fails, as checked with the original Lean definitions.

\paragraph{Scope.}
These are post-hoc certificate audits, not additional agent-produced proofs. Archived Lean definitions supply the original values; offline programs check graph extraction and engine correspondence, outside an end-to-end Lean verification of the engine. Their fixed-opening scope is distinct from the four-opening ground-truth search of \S\ref{app:truth}.

\apppar{Using proof dependencies to find model errors.}{app:e1}
The repairs above corrected rules used by the original proofs. We
now test whether knowing which rules a proof depends on helps an
agent find errors in its world model.

We start from a reference model that agrees with the game under
the abstractions of \S\ref{app:truth}, and deliberately make one
rule incorrect. We select 14 such errors on L4 and 7 on L5 that
change the model's minimum winning cost, and repeat each experiment
with three seeds.

Claude Opus~5.5 receives the faulty model, a closed Lean proof of
impossibility under that (faulty) model, and access to the game. Although the proof is valid
under the model's rules, those rules may disagree with the game.
Claude must find an actual transition where the model predicts
a different outcome from the game, within the faulty rule's scope.
Simply naming the faulty rule does not count as success.

We compare three conditions. In C1, Claude receives the model's
rules and their recorded evidence, without an explicit list of
proof dependencies. In C2, it additionally receives the list
of rules used by the proof, but is not told which one is wrong.
In R, it is directly told the faulty rule. We also include
random exploration (C0) and exhaustive engine search (EX) as
references. Each agent run has a budget of 50 game actions on
L4 or 5,000 on L5.

On L4, C1 finds a conflicting transition in 16 of 42 runs.
Providing proof dependencies (C2) increases this to 29 of 42.
Even when directly told the faulty rule (R), Claude succeeds
in only 30 of 42 runs. Thus, proof dependencies nearly match
the benefit of knowing the faulty rule, without revealing it.
L5 is harder: C1 succeeds in 1 of 21 runs, C2 in 3 of 21,
and R in 5 of 21.

Across both levels, C2 finds counterexamples in 32 of 63 runs,
compared with 17 of 63 for C1. On 12 error-free controls --- the
same procedure run on an unmodified reference model, with no
injected error --- C2 raises no false alarms, compared with 0.33
per run for C1.
Table~\ref{tab:e1main} additionally reports each condition's median
actions and tokens to discovery, how often it names the faulty rule
in the conflict log or final model, and the C0 (random) and EX
(exhaustive) references.

\begin{table}[!htbp]
\centering\footnotesize
\setlength{\tabcolsep}{3pt}
\caption{Counterexample discovery with injected rule errors.
Each of the 14 L4 and 7 L5 errors is tested with three seeds.
\emph{Found}: runs that discover a conflicting transition within
the action budget. \emph{Median}: Kaplan--Meier median actions
to discovery, censored at the budget. \emph{Named}: runs that
identify the faulty rule in the conflict log or final model.
EX has no action budget.}
\label{tab:e1main}
\begin{tabular}{@{}lccccccc@{}}
\toprule
Arm & L4 found & L4 median & L5 found & L5 median &
\shortstack{Found\\(L4 + L5)} & Named & Tokens (median) \\
\midrule
C0 random & 12/42 & $>50$ & 0/21 & $>5000$ & 12/63 & --- & --- \\
C1 per-rule evidence & 16/42 & $>50$ & 1/21 & $>5000$ & 17/63 & 35/63 & 2.35M \\
C2 proof dependencies & 29/42 & 27 & 3/21 & $>5000$ & 32/63 & 50/63 & 3.29M \\
R faulty rule named & 30/42 & 26 & 5/21 & $>5000$ & 35/63 & 61/63 & 1.98M \\
EX exhaustive & 14/14 & 136{,}852 & 1/7 & --- & 15/21 & --- & --- \\
\bottomrule
\end{tabular}
\end{table}

A separate L5 comparison with 42 paired runs per condition
shows the same direction of improvement: C2 finds a conflict
in 8 of 42 runs, compared with 5 of 42 for C1
(Table~\ref{tab:e1l5extension}). We report these results
separately from the 21-run L5 sample above.

\begin{table}[H]
\centering\footnotesize
\setlength{\tabcolsep}{6pt}
\caption{Additional L5 comparison with 42 paired runs per
condition, using the same 5,000-action budget and discovery
criterion as Table~\ref{tab:e1main}.}
\label{tab:e1l5extension}
\begin{tabular}{@{}lcc@{}}
\toprule
Arm & L5 found & Discovery rate \\
\midrule
C1 per-rule evidence & 5/42 & 11.9\% \\
C2 proof dependencies & 8/42 & 19.0\% \\
\bottomrule
\end{tabular}
\end{table}

\apppar{Additional tests of the verification procedure.}{app:lean-python}
We conduct mutation tests to examine what kinds of errors our
verification procedure can detect. In experiments, we deliberately introduce errors
into three models: the two sealed-level reference models (2{,}675
and 3{,}502 states) and the merged \code{lp85} model (869 states).
These errors affect proof certificates, model rules, consistency
between Python and Lean, or theorem statements. We compare a Python
certificate checker with our Lean\mbox{-}based verification procedure,
which also includes a separate audit of theorem statements.

Both methods reject all 12 invalid proof certificates, including
nine that would support conclusions contradicted by the game.
The Lean-based procedure additionally detects all three tested
Python--Lean mismatches and all three reversed bounds, which the
Python checker misses. Its separate statement audit also flags
all six cases where a theorem holds only because its premises
cannot be satisfied or its starting-state set is empty. Such
theorems may still compile in Lean: the kernel verifies that
conclusions follow from their premises, not whether those
premises are meaningful.

Neither method detects errors deliberately introduced into the
model's rules (M1--M6), or theorem statements referring to the
wrong goal or model version (S3, S7). These results illustrate
both the additional checks provided by Lean-based verification
and their limits: a valid proof does not guarantee that the
underlying model correctly describes the game.

\begin{table}[!htbp]
\centering\footnotesize
\setlength{\tabcolsep}{5pt}
\caption{Errors detected by the Python certificate checker
and the Lean-based verification procedure across three models.
Counts show detected cases out of tested cases. Vacuous
statements are flagged by a separate audit, not rejected
by the Lean kernel.}
\label{tab:leanpy}
\begin{tabularx}{\linewidth}{@{}Xcc@{}}
\toprule
Injected error & Python checker & Lean + audit \\
\midrule
Invalid proof certificates & 12/12 & 12/12 \\
Python--Lean model mismatches & 0/3 & 3/3 \\
Reversed bounds & 0/3 & 3/3 \\
Vacuous theorem statements & 0/6 & 6/6 flagged \\
Incorrect model rules & 0/18 & 0/18 \\
Wrong goal or model version & 0/6 & 0/6 \\
\bottomrule
\end{tabularx}
\end{table}

\section{Interview prompts}
\label{app:prompts}
This appendix reproduces verbatim the interview of Appendix~\ref{app:configs}. After each attempt the harness sends questions 1--3, the formalization questions (AWM runs only) and the verdict lines, in the same session as the play; the level is then reset with the retry message. After the last attempt the final interview replaces questions 1--3 and keeps the same verdict lines.

\lstdefinestyle{prompt}{
  basicstyle=\ttfamily\scriptsize,
  breaklines=true,
  breakatwhitespace=true,
  columns=fullflexible,
  keepspaces=true,
  frame=single,
  framesep=4pt,
  rulecolor=\color{black!25},
  backgroundcolor=\color{black!3},
  xleftmargin=4pt, xrightmargin=4pt,
  aboveskip=6pt, belowskip=8pt,
  showstringspaces=false,
  upquote=true,
  literate={–}{{--}}1 {—}{{---}}1 {’}{{'}}1 {‘}{{`}}1 {“}{{``}}1 {”}{{''}}1
}

\needspace{4\baselineskip}
\par\medskip\noindent\textbf{Questions 1--3, every run.}\par\nobreak
\begin{lstlisting}[style=prompt]
That attempt failed. The game will reset this level, and you may get another attempt.

Answer the following. Keep it to what it needs; do not pad.

1. What happened on the level you were on when the attempt ended, and what do you think made
   it fail? Explain in plain prose, as if to someone who did not watch you play.

2. Do you still see a way to beat this level?
   - If you do: say what it is, and what you would still need to check to be sure of it.
   - If you no longer do: say why not. State precisely what you are claiming, give the
     argument for it, and make clear what the argument rests on - which parts you established
     and which are assumptions you did not check. Someone else should be able to follow it and
     judge whether it holds. The form of the argument is yours to choose.
   - If you cannot tell either way, say so, and say what would settle it.

3. What will you do on the next attempt?
\end{lstlisting}

\needspace{4\baselineskip}
\par\medskip\noindent\textbf{Formalization questions, AWM runs.}\par\nobreak
\begin{lstlisting}[style=prompt]
4. Before you answer this one, bring your formal system up to date. It is the thing you are
   building across attempts; the proof is downstream of it. Work in this order, in your
   working directory:

   a. Objects. What things are on the board, and what slots does each thing carry? For every
      object you keep, you should be able to name a pair of positions that stop being
      distinguishable if you drop it.
   b. Rules. Write each rule as a function: given a position and an action, what does it
      change? Each rule needs a domain you can actually execute.
   c. Simplify. Drop slots no rule reads. Merge rules that always decide alike. After this,
      your model's verdict on every frame you have observed must be unchanged.
   d. Formalize. Turn the objects and rules into Lean *definitions* - not theorems. Each rule
      ends up in exactly one of three states: a definition, a consequence of definitions, or
      explicitly marked "not in the model".

   If you already have a formal system from an earlier attempt, **amend it. Do not start a new
   one.** What you learned this attempt is a change to specific objects, rules or definitions;
   make those changes and leave the rest alone. A rewrite throws away the part of your work
   that was already right, and it destroys the record of what this attempt actually taught you.

5. Now prove your answer to question 2 **on that formal system**: that the level can be won,
   or that it cannot.

   - The proof must compile, and `#print axioms` on your main theorem must come back clean -
     no `sorryAx`, no `native_decide`. Report that output verbatim.
   - Say what your theorem actually quantifies over and what it assumes. If the proof needs a
     premise you did not establish, write it as an explicit hypothesis rather than burying it
     in a definition.
   - Name the rules from step 4b that your theorem leans on. If your Lean definitions carry
     something your rules do not - an extra assumption, a tighter domain, a step treated as
     atomic - say so here. A proof that is clean about the wrong system is worth nothing, and
     a model that is *stricter* than the game is the dangerous direction: "impossible in my
     model" does not transfer to the game.
   - Put the file in `lean/` and tell me its filename.
   - If you cannot complete the proof, say exactly where it breaks and what is missing. An
     honest "it does not go through, here is the gap" is worth more than a theorem that proves
     something weaker than it looks.

6. If this attempt taught you nothing that changes your formal system - no rule revised, no
   definition touched - do not invent a change. Say so instead, on its own line, in exactly
   this form, and then give the reason:

       FORMAL-UNCHANGED: <why this attempt added nothing to the model - what you tried, what
       you already knew, and what would have had to happen for the model to change>

   "No change, and here is why" is a real result. A change made only to look busy is not.
\end{lstlisting}

\needspace{4\baselineskip}
\par\medskip\noindent\textbf{Verdict lines.}\par\nobreak
\par\smallskip\noindent\textit{AWM runs.}\par\nobreak
\begin{lstlisting}[style=prompt]
End your reply with exactly these two lines, on lines of their own, with nothing after them:

VERDICT: winnable | impossible | impossible_in_model_proved | impossible_in_model | undecided
CONFIDENCE: <an integer from 0 to 100>

The two "in_model" words are for when the level cannot be won under the rules you yourself
measured, and the step from "those rules" to "the real game" is an assumption you are stating
rather than something you established. Say which assumption, in your answer to question 2.
Use `impossible_in_model_proved` only if you are also handing over a Lean proof of it that
compiles and whose `#print axioms` is clean - name the file. Otherwise use `impossible_in_model`.
Your proof will be compiled and its axioms checked, so do not claim `_proved` on a file that
does not go through; `impossible_in_model` with an honest account of the gap is worth more.

Answer in writing only. Do not send any more game actions.
\end{lstlisting}
\par\smallskip\noindent\textit{Baseline runs.}\par\nobreak
\begin{lstlisting}[style=prompt]
End your reply with exactly these two lines, on lines of their own, with nothing after them:

VERDICT: winnable | impossible | undecided
CONFIDENCE: <an integer from 0 to 100>

Answer in writing only. Do not send any more game actions.
\end{lstlisting}

\needspace{4\baselineskip}
\par\medskip\noindent\textbf{Retry message.}\par\nobreak
\begin{lstlisting}[style=prompt]
You have another attempt on that level.

`POST /reset` restarts the level you were on; the levels you already beat stay beaten, and the
level's own counters start over. Play it again, as far as you can. When the attempt ends, stop
and report as before.
\end{lstlisting}

\needspace{4\baselineskip}
\par\medskip\noindent\textbf{Final interview, questions.}\par\nobreak
\begin{lstlisting}[style=prompt]
That attempt failed.

Answer the following in writing. Give each answer the length it needs; do not pad.

1. What happened on the level you were on, and what do you think made it fail?
   Explain in plain prose, as if to someone who did not watch you play.

2. Do you still see a way to beat this level?
   - If you do: give the line of play, and say what you checked it against.
   - If you no longer do: say why not. State precisely what you are claiming, give the
     argument for it, and make clear what the argument rests on - which parts you established
     and which are assumptions you did not check. Someone else should be able to follow it and
     judge whether it holds. The form of the argument is yours to choose.
   - If you cannot tell either way, say so, and say what would settle it.

3. Which rule did you newly work out across these attempts, and from which observation?

4. Which rule are you still unsure about, and unsure in what way?
\end{lstlisting}

\end{document}